\documentclass[12pt]{article}

\usepackage[utf8]{inputenc}
\usepackage[T1]{fontenc}
\usepackage{amsmath,amssymb,amsthm,amsfonts}
\usepackage{mathtools}
\usepackage{nicefrac}
\usepackage{xcolor}
\usepackage{float}
\usepackage{bm}
\usepackage{natbib}
\usepackage[a4paper,margin=1in]{geometry}
\usepackage{algorithm}
\usepackage{algpseudocode}
\usepackage{appendix}
\usepackage{graphicx}
\usepackage{caption}
\usepackage{subcaption}
\usepackage{booktabs}
\usepackage{longtable}
\usepackage{url}
\usepackage{hyperref}
\usepackage{pifont}
\newcommand{\cmark}{\ding{51}}
\newcommand{\xmark}{\ding{55}}

\newtheorem{theorem}{Theorem}
\newtheorem{lemma}[theorem]{Lemma}

\newtheorem{proposition}[theorem]{Proposition}

\newtheorem{assumption}{Assumption}
\newtheorem{remark}{Remark}

\theoremstyle{plain}      

\title{\textbf{Dynamic Treatment on Networks}}

\author{
Bengusu Nar \footnote{Bengusu (Bengüsu) Nar is a doctoral student in Econometrics and Statistics at the Booth School of Business of the University of Chicago},  Jiguang Li \footnote{Jiguang Li is a doctoral student in Econometrics and Statistics at the Booth School of Business of the University of Chicago}, Veronika Ro\v{c}kov\'{a} \footnote{Veronika Ro\v{c}kov\'{a} is  Bruce Lindsay Professor
of Econometrics and Statistics
in the Wallman Society of Fellows at the Booth School of Business of the University of Chicago} , and Panos Toulis \footnote{Panos (Panagiotis) Toulis is a Professor of Econometrics and Statistics, and John E. Jeuck Faculty Fellow at the Booth School of Business of the University of Chicago}
}
\begin{document}
\maketitle
\begin{abstract}
In networks, effective dynamic treatment allocation requires deciding both whom
to treat and also when, so as to amplify policy impact through spillovers. An early
intervention at a well-connected node can trigger cascades that change which
nodes are worth targeting in the next period. Existing treatment strategies under
network interference are largely static while dynamic treatment frameworks typically
ignore network structure altogether. We integrate these perspectives and propose
Q-Ising, a three-stage pipeline that (i) estimates network adoption dynamics via a Bayesian dynamic Ising model from a single observed panel, (ii) augments treatment adoption histories with continuous posterior latent states, and (iii) learns a dynamic policy via offline reinforcement learning. The Bayesian mechanism enables uncertainty
quantification over dynamic decisions, yielding posterior ensemble policies with
interpretable spillover estimates. We provide a finite-sample regret upper bound that decomposes into standard offline-RL uncertainty, network abstraction error, and first stage error in Ising state estimation. We apply our method to data from Indian village microfinance networks and synthetic stochastic block models under simulated heterogeneous susceptible-infected-susceptible (SIS) dynamics and demonstrate that adaptive targeting outperforms static centrality benchmarks. 
\end{abstract}


\section{Introduction}
\label{sec:intro}

When a planner makes decisions about a dynamic policy on a network, the central
problem is not only \emph{whom} to treat, but also \emph{when}. Under network
interference, the order in which units receive treatment determines which nodes
will spread spillovers first and how those will compound over time. A
policy that selects the right nodes to treat but ignores their sequencing can perform
strictly worse than one that orders strategically.

To see why ordering matters, consider a marketing campaign promoting a product
on a social network with a limited budget split across multiple periods. If the planner treats influential users
first, the second-period action changes: the planner can now target neighbors
who were exposed but have not yet adopted, or can start fresh elsewhere in the network.
The optimal action in period two depends on what period one achieved. Similar
sequencing problems arise in viral marketing~\citep{kempe2003maximizing, domingos2001mining}, platform engagement campaigns, and public health
interventions \citep{bubar2021model, buckner2021dynamic}.

The core difficulty in this setting is that the planner \emph{cannot experiment} and
is limited to a single observational trajectory under a historical policy. This
rules out online influence maximization (IM) \citep{kempe2003maximizing} algorithms that require oracle access to the diffusion mechanism \citep{singh2022influence}. Also, due to dynamic decision-making aspect, static policies are usually not optimal. To address this problem, the planner needs two things: first, an adaptive model of how the network behaves; second, a way to use that model to choose interventions sequentially.

For the first part, we use a dynamic Ising model \citep{yang1992glauber}. This model is designed for estimating the probability of each node's next state conditional on the current state of itself, its neighbors and the past intervention. Unlike equilibrium Ising models that require intractable partition functions, the dynamic formulation admits tractable node-wise likelihoods. For the second part, we use offline reinforcement learning (RL) which is a method for learning dynamic decision rules from historical data, without needing experimentation \citep{levine2020offline}. As a key contribution, we treat the estimated conditional probabilities as latent states for offline RL. Lastly, to assess the uncertainty of these dynamic decisions, we provide an ensemble framework for uncertainty quantification. This three-stage process provides a unified framework for dynamic policy under network interference. 

Related work spans multiple fields. Several approaches learn welfare-maximizing policies for single-period interventions \citep{kitagawa2018empirical, viviano2025policy}, and network-aware targeting based on network topologies like centrality and the friendship paradox \citep{banerjee2013diffusion, kempe2003maximizing, christakis2010social, kim2015social, chen2009efficient, liu2017fast}. Dynamic treatment regime methods \citep{murphy2001marginal,chakraborty2014dynamic, hu2025optimal, adusumilli2019dynamically, kitagawa2022policy} provide tools for dynamic decision-making but do not use network structures. Restless multi-armed bandits provide a framework for budget-constrained dynamic policy \citep{whittle1988restless, weber1990index, mate2020collapsing} but assume independently evolving units. Recent extensions embed arms in networks, allowing an intervention to benefit their neighbors \citep{herlihy2023networked, ou2022networked, vaswani2015influence, gleich2025scalable}, but these approaches assume known spillover mechanisms or repeated experimentation. Similarly, some works require Nash equilibrium and known dynamics \citep{kitagawa2023individualized}. Recent graph neural network (GNN) methods achieve strong empirical performance but also require online experimentation or known dynamics \citep{manchanda2020gcomb, sun2018multi, meirom2021controlling, feng2024influence}. A comparison among some of these methods is provided in Table \ref{tab:comparison} in Appendix ~\ref{app:tables}.

We propose \textbf{Q-Ising}, combining Bayesian dynamic Ising 
inference with offline RL for sequential decision-making under 
network interference. Methodologically, we transform the intractable 
problem of policy learning under network interference into a standard 
offline RL problem by treating estimated network dynamics as latent states. Theoretically, we give a regret upper bound for finite-horizon Q-Ising under pessimistic offline learning and show that the bound decomposes into standard offline RL uncertainty, network abstraction and first stage errors.

We employ Q-Ising on susceptible-infected-susceptible (SIS) 
dynamics \citep{kermack1927contribution, hethcote2000mathematics}, a widely used model of 
relapsing contagion \citep{bass1969new, jackson2007relating, bohner2016sis} on two regimes. First one is a stochastic block model (SBM) 
where the most influential nodes form a smaller community that is 
not identifiable by degree alone, making the setting adversarial 
for centrality-based methods. In this design, Q-Ising identifies 
the high-influence group from offline data alone and shifts the treatment budget adaptively as organic spread becomes 
self-sustaining. Second, we demonstrate Q-Ising's practical value on real microfinance networks from Karnataka, India \citep{banerjee2013diffusion}, where SIS dynamics are simulated on the empirical adjacency matrices. Across both regimes, Q-Ising matches or improves over the best baseline, provides interpretable coefficient estimates revealing the underlying dynamics, and quantifies uncertainty over recommended actions 
at each decision point. Neither of these features are available in GNN-based influence maximization approaches \citep{manchanda2020gcomb, sun2018multi, meirom2021controlling, feng2024influence}.

The rest of the paper is as follows: First, we formally define the problem in Section~\ref{sec:framework}. Then we introduce our methodology in Section \ref{sec:method}.  In Section ~\ref{sec:theory}, we give the assumptions for deriving the pessimistic suboptimality and present the regret bound. Finally, we  conduct our experiments in Section \ref{sec:experiments}.

\section{Framework and Problem Formulation}
\label{sec:framework}

We study dynamic treatment on a fixed network with binary outcomes.  Throughout, $y_{i,t}=1$ denotes adoption of a product or behavior by node $i$ in period $t$; adopted units may later disengage, so the planner must decide not only whom to treat but also when.

\paragraph{Setting and Timing.}
There are $N$ nodes connected by a fixed, undirected, observed network
$M\in\{0,1\}^{N\times N}$, where $M_{ij}=1$ if $i$ and $j$ are linked
and $\mathcal N_i=\{j:M_{ij}=1\}$ denotes the neighbors of $i$. Each
node has fixed observed features $\mathbf x_i\in\mathbb R^{d_x}$. At each period
$t$, node $i$ has a conditional adoption probability
$l_{i,t}\in[0,1]$ under the realized history and treatment, and
realizes $y_{i,t}\sim\mathrm{Bernoulli}(l_{i,t})$. We write
$\mathbf y_t=(y_{1,t},\ldots,y_{N,t})^\top$.

We reserve $t=1,\ldots,T_{\mathrm{train}}$ for periods in the
training panel. For policy evaluation, we re-index the target
deployment horizon by $h=1,\ldots,H$. Before a period-$t$ decision, the planner observes the pre-action history 
\begin{equation*}
 Z_t := (\mathbf y_{0:t-1},\,a_{1:t-1},\,\mathbf X,\,M),
  \label{eq:pre_action_history}
\end{equation*}
 with $Z_1 = (\mathbf y_0, \mathbf X, M)$. The sigma-algebra generated by $Z_t$ is the filtration $\mathcal{F}_t$, but we use $Z_t$ directly to keep the notation explicit. Each period allocates one treatment to a node, $a_t\in[N]$. We reserve $a_t=\emptyset$ only as an auxiliary no-intervention action used later for counterfactual state construction.

\paragraph{Data.}
The planner observes one trajectory of node-level outcomes and
treatments,
\begin{equation*}
  \mathcal D
  =
  \left\{\mathbf y_0,\,(a_t,\mathbf y_t)_{t=1}^{T_{\mathrm{train}}}\right\},
  \label{eq:data}
\end{equation*}
collected under a historical policy. This is the practically
important regime of a single long panel from one real network, with
$M$ and $\mathbf X$ fixed and known throughout. We retain node-level
actions because Stage~1 estimates direct treatment and neighbor-spillover effects.

\paragraph{Bin-level policy class.}
For large $N$, node-specific targeting is both statistically and 
practically challenging. Under roughly uniform logging, each node receives only
$O(T_{\mathrm{train}}/N)$ treatment observations, which is insufficient
for reliable policy learning under interference. Beyond data coverage, 
treatment rules defined over covariate groups rather than individuals 
are more interpretable, auditable, and aligned with fairness 
requirements common in policy applications 
\citep{kitagawa2018empirical, kitagawa2023vaccinated, viviano2024fair}. 

We therefore partition nodes into $K$ disjoint bins 
$\{\mathcal{B}_1,\ldots,\mathcal{B}_K\}$ based on covariates 
$\mathbf{X}$ and/or network structure $M$, for example communities 
identified by spectral clustering or demographic strata such as young 
married households. The choice of $K$ trades off statistical coverage 
against targeting granularity: each bin accumulates 
$O(T_{\mathrm{train}}/K)$ treatment observations. $K$ should be 
chosen so that within-bin nodes are reasonably homogeneous with respect 
to both covariates and network position.

We distinguish two levels of action: the planner selects a bin 
$b_t \in \{1,\ldots,K\}$ each period, and one node is drawn uniformly 
at random from the selected bin as the realized treatment:
\begin{equation}
   b_t=\pi_h(Z_t),
  \qquad
  a_t\sim\mathrm{Unif}(\mathcal B_{b_t}).
    \label{eq:action}
\end{equation}
Historical node actions are mapped to bin actions  when constructing the offline RL transitions. The target deployment rule in~\eqref{eq:action} imposes uniform within-bin randomization; However, the historical within-bin selector may differ, a distinction accounted for in our regret theory.

\paragraph{Welfare Maximization} Let $\Pi_Z$ denote the class of admissible full-history bin policies $\pi=(\pi_1,\ldots,\pi_H)$, where $\pi_h:\mathcal Z_h\to[K]$ maps the pre-action history $Z_h$ to a bin decision. The per-period reward is the network-wide adoption rate,
\begin{equation}
  r_h = \frac{1}{N}\sum_{i=1}^N y_{i,h}.
  \label{eq:reward}
\end{equation}
For $\pi\in\Pi_Z$, its policy value, or cumulative welfare, is the
expected reward along the full trajectory induced by deploying $\pi$:
\begin{equation*}
  W_H^Z(\pi;Z_1) := \mathbb E^\pi\!\left[ \sum_{h=1}^{H} r_h \,\middle|\, Z_1 \right],
  \label{eq:welfare}
\end{equation*}
where the expectation is over the stochastic network evolution and the
within-bin randomization generated by the policy. The full-history
population target is
\begin{equation}
  \pi_Z^\star
  \in
  \arg\max_{\pi\in\Pi_Z} W_H^Z(\pi;Z_1).
  \label{eq:pistar}
\end{equation}

This objective is inherently dynamic: treating a node at time $t$
changes subsequent adoption probabilities through persistence and
network spillovers, so the value of a current bin decision depends on
how it shapes future network states. Thus myopic targeting needs not be optimal. Formally, let a greedy rule select
$b_h\in\arg\max_{b\in[K]}\mathbb E[r_h\mid Z_h,b_h=b]$ at each stage.
\begin{proposition}[Greedy Suboptimality]
\label{prop:heur_sub}
The greedy policy that maximizes the immediate reward is not necessarily optimal.
\end{proposition}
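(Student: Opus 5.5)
The statement is existential, so I would prove it with an explicit counterexample: a small network, a horizon $H=2$, and adoption dynamics under which the greedy rule yields strictly smaller $W_H^Z$ than some other policy in $\Pi_Z$. The dynamics can be any conditional adoption law consistent with Section~\ref{sec:framework}, i.e.\ any specification of $l_{i,t}$ as a function of the history and treatment. I would choose a deterministic special case ($l_{i,t}\in\{0,1\}$) so that expectations reduce to single trajectories. I would also take singleton bins ($K=N$) so that within-bin randomization plays no role.

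\textbf{Construction.} Take $N=4$ nodes: a hub $c$ linked to leaves $u_1,u_2$, and an isolated node $v$. Start from $\mathbf y_0=\mathbf 0$. The dynamics are as follows.
\begin{itemize}
\item A treated node adopts in the current period.
\item An adopted isolated node stays adopted.
\item Hub adoption is delayed: if $c$ is treated at $t$, then $y_{c,t}=0$ but $y_{c,t+1}=y_{u_1,t+1}=y_{u_2,t+1}=1$. One can view this as treatment acting with a one-period lag on the hub, which is encoded directly in $l_{i,t}$.
\item Untreated nodes with no active spillover stay at $0$.
\end{itemize}
At $h=1$, treating $v$ gives $r_1=1/4$, while treating $c$ gives $r_1=0$ and treating a leaf gives $1/4$. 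Greedy therefore picks $v$ (or a leaf; breaking the tie toward a leaf does not change the conclusion). At $h=2$, greedy's best option is to treat one more node, which yields $r_2=2/4$ under either choice, for a total of $3/4$. The alternative policy treats $c$ at $h=1$ and $v$ at $h=2$, giving $r_1=0$ and $r_2=4/4$, for a total of $1>3/4$.

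\textbf{Verification steps.} First, I would check that these dynamics are admissible: each $l_{i,t}$ is a measurable function of $Z_t$ and $a_t$ taking values in $[0,1]$, and the network is fixed and undirected. Second, I would compute the greedy action at each stage, including how ties are broken. To make greedy unique, I would give $v$ a slightly higher immediate reward, for example by adding a persistent extra neighbor to $v$. Third, I would compute the welfare of both policies and show the strict gap. Since $\pi_Z^\star$ attains at least the alternative's value, greedy is not optimal.

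\textbf{Main obstacle.} The main difficulty is making the example look natural, meaning consistent with the dynamic Ising/SIS spirit of the paper rather than relying on an ad hoc lag. If a lag seems artificial, I would instead use a persistent-spillover version. In that version treating $c$ yields $y_{c,1}=1$, while $v$ has a larger immediate direct effect that decays: $v$ adopts with probability $1$ when treated but reverts afterward. The hub's neighbors, by contrast, adopt at $t+1$ with probability close to $1$ once $c$ is active. Choosing the probabilities gives greedy $r_1$ from $v$ slightly above $r_1$ from $c$, while the two-period total from treating $c$ first exceeds greedy's total. Either version suffices, because the proposition only asserts non-optimality.
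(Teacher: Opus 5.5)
Your proposal is correct. Like the paper, it proves the proposition by an explicit two-period counterexample, but the mechanism that defeats greedy is different.

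The paper stays inside the SIS simulator of Appendix~\ref{app:sis}. Nodes $A$--$B$ are linked with churn $1$ and $C$ is isolated with churn $0$. Greedy is lured by the within-period spillover ($1+\rho>1$) and collects $2+2\rho$, whereas treating the durable node $C$ first yields $3+\rho$ (strict for $\rho<1$). There, greedy fails because it ignores persistence.

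In your main example, greedy actually takes the durable node $v$. It fails instead because it cannot see the hub's lagged payoff. Your version has some advantages over the paper's:
\begin{itemize}
\item it is fully deterministic, so welfare reduces to a single trajectory;
\item it fixes $K=N$, so within-bin randomization plays no role;
\item it checks every tie-break (greedy gets $3/4$ whether it takes $v$ or a leaf, versus $1$ for the alternative).
\end{itemize}

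The cost is that a one-period treatment lag cannot arise in either of the paper's models. The SIS seeding step makes seeded nodes adopt immediately. In the dynamic Ising predictor \eqref{eq:linear_pred}, treatment enters only $\eta_{i,t}$, so its effect reaches period $t+1$ only through $\mathbf y_t$. Your main example therefore proves the proposition only for the general history-dependent $l_{i,t}$ allowed in Section~\ref{sec:framework}. That suffices for the statement as written. The paper's example, however, exhibits the failure within the SIS dynamics it actually simulates. Your fallback (a transient $v$ versus a hub whose adoption spills over the next period) is essentially the paper's durable-versus-transient mechanism and would recover that.

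One caution: drop the tie-breaking device. It is unnecessary, since you already showed every greedy selection earns $3/4$. Worse, if the extra neighbor $w$ of $v$ adopts with probability one and persists, then with $N=5$ greedy earns $2/5+3/5=1$ and the alternative earns $0+5/5=1$. That erases the strict gap, and greedy becomes optimal. A boost of size $\varepsilon<1$ keeps the gap: greedy earns $(3+2\varepsilon)/5$ versus $(4+\varepsilon)/5$.
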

Appendix~\ref{app:greedy_sub} proves the proposition by counterexample
under basic SIS dynamics.

\section{Methodology}
\label{sec:method}

Q-Ising proceeds in three stages. First, we estimate a dynamic Ising model from the panel $\mathcal D$. Second, we use the fitted model to construct low-dimensional Q-Ising states. Third, we apply offline RL to learn a dynamic bin-level policy over these states.

\subsection{Stage 1: Dynamic Ising Inference}
\label{sec:ising}

Standard Equilibrium Ising model requires computing an intractable partition
function and symmetric interactions. We instead use a dynamic conditional
model, analogous to logistic pseudo-likelihood estimation for Ising
models~\citep{ravikumar2010ising}, which allows asymmetric influence
and temporal dependence. Define the linear predictor for node $i$ in bin $\mathcal{B}_k$ at time $t$:
\begin{align}
  \eta_{i,t} (a_t; \boldsymbol{\theta}_i) \;=\;
    \beta_{0,k}
    + \beta_{1,k}\,\mathbf{1}_{a_{t}=i}
    + \beta_{2,k}\,y_{i,t-1}
    + \beta_{3,k}\,\mathbf{1}_{a_{t}\in\mathcal{N}_i}
    + \sum_{j\in\mathcal{N}_i}\gamma_{k,m_j}\,y_{j,t-1}.
  \label{eq:linear_pred}
\end{align}

Here $m_j$ represents the bin node $j$ belongs. Although $M$ is undirected, the interaction parameters need not satisfy $\gamma_{k,m_j} = \gamma_{k,m_j}$: node $j$ may strongly influence node $i$ without the reverse being true.  This asymmetry  captures the influencer structure common in social networks, where a well-connected household may drive adoption among neighbors without being equally susceptible to peer influence itself. The standard equilibrium Ising model would not be able to capture this important feature. 

The parameter vector $\boldsymbol{\theta}_i = [\beta_{0,k}, \beta_{1,k},
\beta_{2,k}, \beta_{3,k},
\boldsymbol{\gamma}_{\mathcal{N}(i)}^\top]^\top$ collects all
parameters for node $i$, and $\boldsymbol{\theta} =
[\boldsymbol{\theta}_1,\ldots, \boldsymbol{\theta}_N]^\top$. Each parameter has a natural
interpretation: $\beta_{1,k}$ captures the direct effect of treating node in bin $\mathcal{B}_k$; $\beta_{2,k}$ captures persistence of past adoption;
$\beta_{3,k}$ captures spillover from treating a neighbor; and
$\gamma_{k,m_j}$ captures peer influence
from neighbors in bin $m_j$ on nodes in bin $k$. Node-level coefficients
are a direct extension obtained by replacing bin indices with node
indices when the panel is sufficiently long.

Let $\sigma(z) = (1+e^{-z})^{-1}$ denote the logistic function.
The probability of node $i$ adopting after taking action $a_{t}$ is given by
\begin{equation*}
    \mathbb{P}(y_{i,t}=1\mid Z_t,\, a_{t}, \, \boldsymbol{\theta}_i) = \sigma(\eta_{i,t}(a_t; 
    \boldsymbol{\theta}_i)).
  \label{eq:ising}
\end{equation*}

Given $Z_t$, $a_{t}$, and $\boldsymbol{\theta}$, outcomes are independent Bernoulli draws across nodes. Since the process is also first-order Markov,the likelihood over the panel factorizes as
\begin{equation}
  \mathcal{L}(\boldsymbol{\theta}) = \prod_{i=1}^{N}\prod_{t=1}^{T}
    \frac{\exp(y_{i,t}\cdot\eta_{i,t}(a_t; \boldsymbol{\theta}_i))}{1+\exp(\eta_{i,t}(a_t; \boldsymbol{\theta}_i))},
  \label{eq:likelihood}
\end{equation}
enabling efficient likelihood-based  inference.

\begin{remark}[Connection to standard Ising]
Under synchronous updating, which is the relevant case
for policy applications where all nodes respond simultaneously, the
stationary distribution of the dynamic Ising model cannot in general be expressed as an element-wise Gibbs measure, making the dynamic formulation the appropriate choice for our setting. See Appendix~\ref{app:ising_proofs} for proof.
\end{remark}

\paragraph{Priors and sparsity.}
For interaction parameters we impose a continuous spike-and-slab
prior encouraging sparsity in peer effects \citep{rovckova2014emvs, george1993variable}:
\begin{equation*}
  \gamma_{i,j}\mid z_{ij} \sim
    (1-z_{ij})\,\mathcal{N}(0,v_0)
    + z_{ij}\,\mathcal{N}(0,v_1),
  \quad v_0 \ll v_1,\quad
  z_{ij} \sim
    \mathrm{Bernoulli}\!\left(\tfrac{c}{|\mathcal{B}_{m_i}|}\right),
  \label{eq:prior}
\end{equation*}
with hyperparameters $v_0 = 0.01$, $v_1 = 10.0$, and $c = 1.0$.
The bin-size dependent inclusion probability encodes the expectation
that the number of influential bins scales sublinearly with the average
degree of the affected bin. For baseline, treatment, and persistence parameters,
$\beta_{l,k} \sim \mathcal{N}(0,\tau^2)$ with $\tau^2 = 10.0$.
Zero-centered priors ensure that when data are sparse, estimates
shrink toward $\hat{\boldsymbol{\theta}}_i \approx 0$, yielding
$\hat{l}_{i,t} \approx 0.5$: a principled uninformed baseline.

\paragraph{Estimation: EMVS and MCMC.}
We offer two complementary approaches.
\textbf{EMVS} \citep{rovckova2014emvs} alternates between computing
posterior inclusion probabilities and solving a weighted penalized
logistic regression, requiring only convex optimization per iteration
and typically converging in 2--3 steps; it is particularly attractive
for large $N$ where speed is essential. When posterior uncertainty is
needed, we sample from $p(\boldsymbol{\theta}_i\mid\mathcal{D})$ via
\textbf{Hamiltonian Monte Carlo} \cite{JMLR:v15:hoffman14a}, producing $P$ posterior draws
$\{\boldsymbol{\theta}_i^{(p)}\}_{p=1}^P$ per node, which serve as the basis for
the ensemble policy in Section~\ref{sec:ensemble}.

\paragraph{State construction.}
Given $\hat{\theta}_i$, we construct \emph{latent states} that summarize the
network's adoption landscape at each period.  For node $i$, define the estimated no-intervention adoption probability
\begin{equation}
  \widehat l^0_{i,t} := \sigma\!\left(\eta_{i,t}(\emptyset;\widehat{\boldsymbol{\theta}})\right),
  \label{eq:belief}
\end{equation}
where $a_t=\emptyset$ sets treatment indicators in \eqref{eq:linear_pred} to zero. Setting $a_t = \emptyset$ is important because it captures where the network is \emph{headed} absent the current intervention, serving as a forward-looking baseline. We aggregate both the estimated latent states and the observed outcomes
to the bin level:
\begin{equation}
   \bar l^0_{t,k} = \frac{1}{|\mathcal B_k|} \sum_{i\in\mathcal B_k}\widehat l^0_{i,t}, \qquad
  \bar y_{t-1,k} = \frac{1}{|\mathcal B_k|} \sum_{i\in\mathcal B_k}y_{i,t-1}, \qquad k\in[K].
  \label{eq:bin_state}
\end{equation}
Let $\bar{\mathbf l}^0_t=(\bar l^0_{t,1},\ldots,\bar l^0_{t,K})$ and
$\bar{\mathbf y}_{t-1}=(\bar y_{t-1,1},\ldots,\bar y_{t-1,K})$. The
Q-Ising state is
\begin{equation}
  s_t
  =
  \left(\bar{\mathbf l}^0_t,\bar{\mathbf y}_{t-1}\right)
  \in
  \mathcal S:=[0,1]^K\times[0,1]^K.
  \label{eq:state}
\end{equation}
The first component is forward-looking and model-based; the second is
the realized bin-level adoption profile before the current decision.

\subsection{Stage 2: Offline Q-Learning}
\label{sec:rl}

We construct the transitions $\mathcal{D}_{\mathrm{RL}}:=\{(s_t,\, b_t,\, r_t,\, s_{t+1})\}_{t=1}^{T_{\mathrm{train}}}$ from $\mathcal{D}$, where $s_t$ is defined in
eq.~\eqref{eq:state}, $b_t \in \{1,\ldots,K\}$ is the bin-level
action mapped from $a_t$, and $r_t$ is defined in
eq.~\eqref{eq:reward}. Learning over $K$ bin actions is statistically more stable than
learning over $N$ node actions, since each bin aggregates many 
treatments.

For the empirical implementation, we learn a discounted Q-function $Q:\mathcal S\times[K]\to\mathbb R$ with discount $\psi\in[0,1)$, which represents the expected cumulative  reward from selecting bin $b$ at state $s$ and following the optimal 
policy thereafter. The Q-function satisfies the Bellman target  \citep{sutton1998reinforcement} \begin{equation*}
  \widehat{\mathcal{T}} Q(s_t, b_t) \;=\;
  r_t + \psi\max_{b'\in\{1,\ldots,K\}} Q(s_{t+1}, b').
  \label{eq:bellman}
\end{equation*}
Since $\mathcal D_{\mathrm{RL}}$ is collected under a historical
policy, some state-action pairs are poorly supported. Standard
Q-learning can then overestimate unsupported actions. We therefore use
the pessimism principle from offline RL: unsupported actions should be
penalized rather than optimistically extrapolated. In experiments, we
use conservative Q-learning (CQL)~\citep{kumar2020conservative} via
\texttt{d3rlpy}~\citep{seno2022d3rlpy}; Section~\ref{sec:theory}
analyzes an idealized finite-horizon pessimistic variant of the value iteration (PEVI) version of Q-Ising under the
same pessimism principle as~\citep{jin2021pessimism}. In particular, the empirical CQL objective is
\begin{equation}
  \mathcal{L}_{\mathrm{CQL}}(Q) \;=\;
  \underbrace{\mathbb{E}_{(s,b,r,s')\sim\mathcal{D}_{\mathrm{RL}}}\!\left[
    \bigl(Q(s,b) - \widehat{\mathcal{T}}\bar{Q}\bigr)^2
  \right]}_{\text{Bellman error}}
  \;+\;
  \alpha\,\underbrace{\mathbb{E}_{(s,b)\sim\mathcal{D}_{\mathrm{RL}}}\!\left[
    \log\sum_{b'} \exp Q(s,b') - Q(s,b)
  \right]}_{\text{conservative penalty}},
  \label{eq:cql}
\end{equation}

where $\alpha > 0$ controls the strength of the conservative penalty.
The first term is the standard Bellman error, minimized over 
transitions observed in $\mathcal{D}_{\mathrm{RL}}$. The second term penalizes 
Q-values on all actions at observed states while pushing up Q-values 
on the actions actually taken in $\mathcal{D}_{\mathrm{RL}}$, mitigating 
distributional shift in offline policy learning \citep{levine2020offline}.

\subsection{Stage 3: Ensemble Policy and Uncertainty Quantification via Posterior Sampling}
\label{sec:ensemble}
Posterior sampling propagates first-stage uncertainty into the learned
policy. Each MCMC draw $\theta^{(p)}$ induces node-level estimates
$\widehat l^{0,(p)}_{i,t}$ via eq.\eqref{eq:belief}, bin-level aggregates $\bar l^{0,(p)}_{t,k}$ via eq.\eqref{eq:bin_state}, and hence a state
$s_t^{(p)}=(\bar{\mathbf l}^{0,(p)}_t,\bar{\mathbf y}_{t-1})$.
Note that $\bar{\mathbf{y}}_{t-1}$ is shared across all draws as it
depends only on observed data. Training a separate CQL network on
each draw produces an ensemble $\{\hat{\pi}^{(p)}\}_{p=1}^P$
reflecting parameter uncertainty.
\begin{algorithm}[ht]
\caption{Q-Ising Algorithm: Policy Learning via Dynamic Ising and Offline RL}
\label{alg:ensemble}
\begin{algorithmic}[1]
\Require Panel $\mathcal{D}$, adjacency $M$, bin partition
  $\{\mathcal{B}_1,\ldots,\mathcal{B}_K\}$, flag \texttt{ensemble} $\in \{\texttt{true},\texttt{false}\}$
\If{\texttt{ensemble}}
  \State Draw $\{ \boldsymbol{\theta}^{(p)}\}_{p=1}^P$ via HMC from
    $\pi(\boldsymbol{\theta}  \mid \mathcal{D}, M)$; set $\mathcal{P} \leftarrow \{1,\ldots,P\}$
\Else
  \State Fit $\hat{\boldsymbol{\theta}}$ via EMVS on $(\mathcal{D}, M)$;
    set  ${ \boldsymbol{\theta}}^{(1)} \leftarrow \hat{\boldsymbol{\theta}}$, $\mathcal{P} \leftarrow \{1\}$
\EndIf
\For{$p \in \mathcal{P}$}
  \State Compute $\widehat l^{0,(p)}_{i,t}
    =\sigma(\eta_{i,t}(\emptyset;\boldsymbol{\theta}_i^{(p)}))$ for all $i,t$
    via \eqref{eq:belief}
  \State Form $s_t^{(p)}=(\bar{\mathbf l}^{0,(p)}_t,\bar{\mathbf y}_{t-1})$
    using \eqref{eq:bin_state}--\eqref{eq:state}
  \State Construct transitions
    $\{(s_t^{(p)},\, b_t,\, r_t,\, s_{t+1}^{(p)})\}_{t=1}^{T-1}$
  \State Train $Q_p$ via \eqref{eq:cql}; set
    $\hat{\pi}^{(p)}(s) = \arg\max_{k}\,Q_p(s,k)$
\EndFor
\If{\texttt{ensemble}}
  \State \Return $\hat{\pi}^{\mathrm{ens}}(s) =
    \arg\max_{k}\sum_{p=1}^{P}
    \mathbf{1}\{\hat{\pi}^{(p)}(s)=k\}$
\Else
  \State \Return $\hat{\pi}(s) = \hat{\pi}^{(1)}(s)$
\EndIf
\end{algorithmic}
\end{algorithm}
Effectively, this posterior sampling helps recognizing potential lack of identification in the dynamic Ising model as Bellman optimality provides unique policies for each trained agent.  When most draws agree, the planner acts with
confidence; when votes are dispersed, the allocation is sensitive to
parameter uncertainty and warrants caution.  In practice,
$P=10$--$20$ posterior draws yield stable ensemble policies. 

\section{Theory} \label{sec:theory}
We provide a finite-sample regret guarantee for an idealized finite-horizon PEVI version of Q-Ising. The empirical algorithm in Section~\ref{sec:rl} uses CQL as a scalable implementation of the same pessimism principle; the theorem below analyzes the PEVI analogue ~\citep{jin2021pessimism}.

Let $\theta_0$ be the population Ising parameter and $\phi_{\theta}$ be the Q-Ising state map in \eqref{eq:belief}--\eqref{eq:state}. For a pre-action history $Z_h$, write $s_h^\star=\phi_{\theta_0}(Z_t)$ and $\widehat s_h=\phi_{\widehat\theta}(Z_h)$. For any policy $\pi$, define its oracle full history lift by $\pi_h^\circ(z)=\pi_h(\phi_{\theta_0}(z))$ Our comparator is the best oracle Q-Ising policy
$$\pi_S^\star \in \arg\max_{\pi\in\Pi_S} W_H^Z(\pi^\circ;Z_1), \qquad  \Pi_S = \{\pi=(\pi_1,\ldots,\pi_H):\pi_h:\mathcal S\to[K]\}.$$
This comparator is not the unrestricted full-history optimum
$\pi_Z^\star$ in \eqref{eq:pistar}, but the optimal Q-Ising policy.

Let $\widehat\pi$ be the finite-horizon pessimistic policy trained on
the Q-Ising offline transitions
$$\mathcal D_h=
  \{(\widehat s_{\tau,h},b_{\tau,h},r_{\tau,h},
      \widehat s_{\tau,h+1})\}_{\tau\in\mathcal I_h}, \qquad h=1,\ldots,H,$$     
where $\tau$ indexes the effective training blocks. Let $n_h=|\mathcal I_h|$, $n_{\mathrm{eff}}=\min_h n_h$, and
$n_{\log}=\max_h n_h$. For a fixed feature map
$\varphi:\mathcal S\times[K]\to\mathbb R^{d_\phi}$, with
$\|\varphi(s,b)\|_2\le1$, define the empirical design matrix
$$\Lambda_h= \lambda I_{d_\phi} + \sum_{\tau\in\mathcal I_h}
\varphi(\widehat s_{\tau,h},b_{\tau,h}) \varphi(\widehat s_{\tau,h},b_{\tau,h})^\top .$$

The main theorem analyzes the oracle deployment $\widehat\pi_h^\circ(z)=\widehat\pi_h(\phi_{\theta_0}(z))$ with regret
\begin{equation*}
  \operatorname{Reg}_H(\widehat\pi^\circ) := W_H^Z((\pi_S^\star)^\circ;Z_1)
  - W_H^Z(\widehat\pi^\circ;Z_1).
  \label{eq:oracle_regret}
\end{equation*}

The PEVI uncertainty of a policy $\pi$ under the training data aggregate dynamics is
\begin{equation*}
  \mathfrak U_{\mathcal D}^{\mu}(\pi) := \sum_{h=1}^{H}
  \mathbb E_{\mu}^{\pi} \left[
    \sqrt{
      \varphi(s_h,b_h)^\top
      \Lambda_h^{-1}
      \varphi(s_h,b_h)
    }
  \right],
  \label{eq:pevi_uncertainty}
\end{equation*}
where $\mathbb E_{\mu}^{\pi}$ denotes expectation over the bin-level
Q-Ising process induced by following $\pi$ while using the historical
within-bin selector. 

The proof relies on four regularity conditions, stated formally in
Appendix~\ref{subsec:ra-assumptions}. In words, historical actions must be valid interventions; the single network trajectory must yield effective
stage-wise regression blocks; the Q-Ising state must be an approximate
bin-level abstraction of the full network process; and the PEVI Bellman
regressions must have controlled projected misspecification. The
abstraction errors $\varepsilon_r^{\mathrm{agg}},\varepsilon_P^{\mathrm{agg}}$
measure the loss from replacing the full network by the oracle Q-Ising
state process. The binning errors $\varepsilon_r^{\mathrm{bin}},\varepsilon_P^{\mathrm{bin}}$
measure the mismatch between the target rule, which randomizes uniformly
within a selected bin, and the historical within-bin selector.
The quantities $\varepsilon_{\mathrm{lin}}$ and
$L_{\mathrm{st}}\varepsilon_\theta$ respectively capture projected
Bellman approximation error and first-stage Q-Ising state-estimation error. Finally, Let $\mathcal E_\theta$ denote the first-stage stability event that replacing oracle Q-Ising states $s_h^\star=\phi_{\theta_0}(Z_h)$ by estimated states $\widehat s_h=\phi_{\widehat\theta}(Z_h)$ induces controllable projected PEVI regression error. We assume $\mathbb P(\mathcal E_\theta)\ge 1-\delta_\theta$.

\begin{theorem} \label{thm:pevi_qising_compact}
Suppose the regularity conditions summarized above and formalized in
Appendix~\ref{subsec:ra-assumptions} hold. Choose
$$\beta = C_\beta \left[ H \sqrt{ d_{\phi}\log\!\left( \frac{H(1+n_{\log}/\lambda)}{\delta} \right)} +\sqrt{\lambda} W \right],$$
where $W$ bounds the linear Bellman coefficients and $C_\beta$ is a sufficiently large constant. Then, with probability
at least $1-\delta-\delta_\theta$,
$$\operatorname{Reg}_H(\widehat\pi^\circ) \le 2\beta\, \mathfrak U_{\mathcal D}^{\mu}(\pi_S^\star) +
  C \left[ \Delta_{\mathrm{abs}} + H\varepsilon_{\mathrm{lin}} + HL_{\mathrm{st}}\varepsilon_\theta \right],
$$
where $C>0$ is a universal constant and $\Delta_{\mathrm{abs}} = \Delta_{\mathrm{agg}} + \Delta_{\mathrm{bin}}$, with $\Delta_{\mathrm{agg}}:= H\varepsilon_r^{\mathrm{agg}}+ H^2\varepsilon_P^{\mathrm{agg}}$, and $\Delta_{\mathrm{bin}} =  H\varepsilon_r^{\mathrm{bin}} +  H^2\varepsilon_P^{\mathrm{bin}}$. 
\end{theorem}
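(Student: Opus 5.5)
The plan is to reduce the full-history regret to a regret in an idealized bin-level MDP on the oracle Q-Ising states, and then invoke the PEVI suboptimality analysis of \citet{jin2021pessimism} on that MDP. The errors from abstraction, binning, misspecification and first-stage estimation are collected along the way.

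\textbf{Step 1: reference MDP and abstraction error.} I would introduce a reference finite-horizon MDP $\mathcal M_\mu$ on $\mathcal S\times[K]$. Its reward and transition kernels $(\bar r_h,\bar P_h)$ are the best Markov approximations of the full network process, viewed through $s_h^\star=\phi_{\theta_0}(Z_h)$, under the historical within-bin selector. For any $\pi\in\Pi_S$, compare $W_H^Z(\pi^\circ;Z_1)$ with the value $V_1^{\pi,\mu}$ of $\pi$ in $\mathcal M_\mu$. A simulation-lemma (performance-difference) telescoping gives
$$|W_H^Z(\pi^\circ;Z_1)-V^{\pi,\mu}_1|\le H(\varepsilon_r^{\mathrm{agg}}+\varepsilon_r^{\mathrm{bin}})+H^2(\varepsilon_P^{\mathrm{agg}}+\varepsilon_P^{\mathrm{bin}}),$$
because each per-step reward error costs once and each per-step transition error is multiplied by a value range of at most $H$. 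The reward is an average of Bernoullis, so it lies in $[0,1]$. The binning terms handle the switch from the uniform within-bin rule in \eqref{eq:action} to the historical selector. Applying this bound to both $\pi_S^\star$ and $\widehat\pi$ gives $\operatorname{Reg}_H(\widehat\pi^\circ)\le V_1^{\pi_S^\star,\mu}-V_1^{\widehat\pi,\mu}+2\Delta_{\mathrm{abs}}$.

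\textbf{Step 2: PEVI decomposition in $\mathcal M_\mu$.} I would follow Jin et al.'s argument. Define the pessimistic estimates $\widehat Q_h=\varphi^\top\widehat w_h-\beta\|\varphi\|_{\Lambda_h^{-1}}$, truncated to $[0,H-h+1]$, with $\widehat w_h$ the ridge regression on $\mathcal D_h$. Let the model evaluation error be $\iota_h=\mathbb B_h\widehat V_{h+1}-\widehat Q_h$. Their extended value-difference lemma yields
$$V_1^{\pi_S^\star,\mu}-V_1^{\widehat\pi,\mu}\le\sum_h\mathbb E^{\pi_S^\star}_\mu[\iota_h]-\sum_h\mathbb E^{\widehat\pi}_\mu[\iota_h],$$
since $\widehat\pi$ is greedy with respect to $\widehat Q_h$. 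It then suffices to show $0\le\iota_h\le2\beta\|\varphi\|_{\Lambda_h^{-1}}+c(\varepsilon_{\mathrm{lin}}+L_{\mathrm{st}}\varepsilon_\theta)$ on a good event. Summing over $h$ and taking expectations then gives $2\beta\,\mathfrak U^\mu_{\mathcal D}(\pi_S^\star)+CH(\varepsilon_{\mathrm{lin}}+L_{\mathrm{st}}\varepsilon_\theta)$. The lower side of $\iota_h$ loses only $O(\varepsilon_{\mathrm{lin}}+L_{\mathrm{st}}\varepsilon_\theta)$ per step, which is absorbed into the same constant.

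\textbf{Step 3: uncertainty quantifier (main obstacle).} The crux is proving $|\varphi^\top\widehat w_h-\mathbb B_h\widehat V_{h+1}|\le\beta\|\varphi\|_{\Lambda_h^{-1}}+$ bias, uniformly over the data-dependent $\widehat V_{h+1}$. I would write the regression target using the oracle features $s^\star$ and split the error into three parts.

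\begin{enumerate}
\item A noise term $\|\sum_\tau\varphi_\tau\epsilon_\tau\|_{\Lambda_h^{-1}}$. This is controlled by a self-normalized martingale inequality, which is where the block/mixing regularity condition for the single trajectory is used to obtain the required conditional-independence (martingale) structure. Uniformity over $\widehat V_{h+1}$ comes from a covering of the linear-minus-bonus function class, with log covering number of order $d_\phi^2\log(1+n_{\log}/\lambda)$ up to the $\beta$ dependence. This yields the $H\sqrt{d_\phi\log(H(1+n_{\log}/\lambda)/\delta)}$ term; the extra $\sqrt{d_\phi}$ is absorbed into $C_\beta$ in the usual way.
\item A ridge bias term $\sqrt\lambda W$.
\item Misspecification plus state-substitution terms, bounded by $\varepsilon_{\mathrm{lin}}+L_{\mathrm{st}}\varepsilon_\theta$ on $\mathcal E_\theta$, using the Lipschitz constant $L_{\mathrm{st}}$ of the regressions in the state.
\end{enumerate}

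The delicate point is that the regressions use $\widehat s$ while the analysis lives on $s^\star$. I would handle this entirely through the assumed stability event $\mathcal E_\theta$, rather than re-deriving Ising consistency. A union bound over the concentration event (probability $1-\delta$) and $\mathcal E_\theta$ gives the stated probability $1-\delta-\delta_\theta$. Combining Steps 1–3 yields the theorem.
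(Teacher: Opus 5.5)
Your architecture matches the paper's. The simulation-lemma transfer from $W_H^Z$ to the logged aggregate MDP costs $2\Delta_{\mathrm{abs}}$; the paper passes through $\mathcal M^S$ first, which is your combined bound after a triangle inequality. Your PEVI suboptimality argument inside $\mathcal M^\mu$ is equivalent to the paper's split into $V_1^{\mu,\pi}-\widehat V_1\le 2\beta\,\mathfrak U^\mu_{\mathcal D}(\pi)+H\zeta$ and $\widehat V_1-V_1^{\mu,\widehat\pi}\le H\zeta$. Your Bellman confidence bound is likewise built from self-normalized noise, ridge bias $\sqrt{\lambda}W$ and projected residuals, union-bounded with $\mathcal E_\theta$.

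The genuine gap is Step~3, item~1. Covering the linear-minus-bonus class has log covering number of order $d_\phi^2\log(\cdot)$, because the bonus contains a $d_\phi\times d_\phi$ matrix. The uniform noise radius is therefore of order $H\sqrt{d_\phi^2\log(\cdot)}=Hd_\phi\sqrt{\log(\cdot)}$, which is why \citet{jin2021pessimism} take $\beta\asymp dH\sqrt{\log(\cdot)}$. The theorem instead fixes $\beta=C_\beta\bigl[H\sqrt{d_\phi\log(H(1+n_{\log}/\lambda)/\delta)}+\sqrt{\lambda}W\bigr]$ with $C_\beta$ a universal constant. The surplus factor $\sqrt{d_\phi}$ depends on dimension, so it cannot be ``absorbed into $C_\beta$''. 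With the stated $\beta$, your argument does not establish $|\varphi(s,b)^\top\widehat w_h-(\mathcal T_h^\mu\widehat V_{h+1})(s,b)|\le\Gamma_h(s,b)+\zeta$. Repairing it by enlarging $\beta$ multiplies the leading term $2\beta\,\mathfrak U^\mu_{\mathcal D}(\pi_S^\star)$ by $\sqrt{d_\phi}$, so you would prove a strictly weaker statement.

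The missing idea is that no uniformity over the data-dependent $\widehat V_{h+1}$ is needed. Assumption~\ref{ass:valid_offline_compact} is imposed \emph{after conditioning on} the cross-fitted Q-Ising states and on $\widehat V_{h+1}$. Conditionally, $x_{\tau,h}$ is predictable and $\xi_{\tau,h}(V)$ is an $H$-sub-Gaussian martingale difference for every $V$ in the recursion. So at each stage you can proceed as follows:
\begin{enumerate}
\item Apply Theorem~1 of \citet{abbasi2011improved} to the single, conditionally fixed $V=\widehat V_{h+1}$.
\item Union-bound only over the $H$ stages, with $\delta/H$ each.
\item Bound $\log\{\det\Lambda_h/\det(\lambda I_{d_\phi})\}\le d_\phi\log\bigl(1+n_{\log}/(\lambda d_\phi)\bigr)$ by AM--GM on the eigenvalues of $\sum_\tau x_{\tau,h}x_{\tau,h}^\top$, whose trace is at most $n_h$.
\end{enumerate}
This gives exactly the radius $H\sqrt{d_\phi\log(H(1+n_{\log}/\lambda)/\delta)}$ of Lemma~\ref{lem:self_normalized_pevi}.

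A smaller point on item~3: you need the \emph{projected} residual bounds \eqref{eq:projected_theta_residual} and \eqref{eq:projected_lin_residual} as assumed, not a Lipschitz or sup-norm bound on the residuals. From $\max_\tau|e_{\tau,h}|\le\varepsilon$ alone you only get $|\varphi^\top\Lambda_h^{-1}\sum_\tau x_{\tau,h}e_{\tau,h}|\le\sqrt{n_h}\,\varepsilon\,\|\varphi\|_{\Lambda_h^{-1}}$. That is not the $n$-free $O(\varepsilon_{\mathrm{lin}}+L_{\mathrm{st}}\varepsilon_\theta)$ term the theorem needs.
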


The first term is the standard pessimistic offline-RL error that shrinks with larger effective sample size under standard coverage conditions. With in $\Delta_{\mathrm{abs}}$, $\Delta_{\mathrm{agg}}$ is the cost of compressing the full network history into bin-level Q-Ising states; $\Delta_{\mathrm{bin}}$ measures the mismatch between learning under the historical within-bin selector and deploying the target rule that randomizes uniformly within the selected bin. It vanishes to $0$ if the historical policy also randomizes uniformly within selected bins. The term $H\varepsilon_{\mathrm{lin}}$ is the cost of linear approximation for the Bellman equations, which vanishes under an exact linear Bellman model. The term $H L_{\mathrm{st}}\varepsilon_\theta$ is the first-stage Ising state-estimation cost, which can vanish under consistency of the dynamic Ising estimator.

In particular, if the Q-Ising abstraction is sufficient and exact, the historical within-bin selector matches the target uniform selector, and the first-stage and Bellman approximation errors
vanish, Theorem~\ref{thm:pevi_qising_compact} reduces to the usual PEVI
guarantee. Appendix~\ref{sec:ra-proof} gives the formal proof of the theorem. 

\section{Experiments}
\label{sec:experiments}
We evaluate Q-Ising across two network regimes: a  Stochastic Block Model (SBM) in Subsection \ref{subsec:sbm}  and microfinance networks from Karnataka, India \citep{banerjee2013diffusion} in Subsection \ref{subsec:villages}. In both cases, we simulate observational data from heteregenous, synthetic SIS dynamics. These dynamics are designed to be  adversarial for degree-based methods. The optimal strategy requires treating the community of highly susceptible nodes early which ignites an organic within-group spread, and then reallocating to other communities before saturation erases the marginal benefit of further treatments in the highly-susceptible group. A ranker that sorts nodes by degree will concentrate its budget on central but not necessarily susceptible nodes. Neither whom to treat nor when is
recoverable from degree statistics alone and requires adaptive decision making. 

All experiments run on an M1
chip, with Q-Ising and Plain DQN each requiring roughly $60$--$65$
seconds of wall-clock training time.
Ising parameters are estimated from this panel via EMVS; bin-level states $\bar{\mathbf{l}}_t$ are constructed via~\eqref{eq:bin_state} and take around $20-30$ seconds of wall clock time. We compare against five reference policies. Three are topology-only heuristics:  a degree-bin policy that iterates over the bins and selects the highest degree untreated node, a degree centrality policy, and LIR ~\citep{liu2017fast} which identifies local degree leaders to avoid the rich-club effect. The fourth, Plain DQN, follows our offline RL framework without the Ising augmentation, using the observational dynamics but no structural model. Lastly, we compare against a random bin policy which is also used to generate historical panel data.\footnote{Even though NEWM \citep{viviano2025policy} seems like a natural reference policy, it is not directly scalable to the large networks used in experiments and also does not take panel data as input and would require significant adaptation.}
\subsection{Experiments in Stochastic Block Model}
\label{subsec:sbm}

We simulate a stochastic block model based adjacency matrix with 500 nodes. There are four blocks with varying sizes, split approximately as 187-187-63-63. Within-block edge probability is $p_{\mathrm{in}} = 0.1$ and between-block probability is $p_{\mathrm{out}} = 0.01$. Spread rates are (0.010, 0.012, 0.1, 0.12) and churn rates are (0.4, 0.4, 0.2, 0.2). Small communities get high spread and low churn rates, representing "active" behavior. The planner observes $T_{\mathrm{Train}} = 100$ train periods from historical random-bin policy. The mean rewards of policies can be seen in Figure \ref{fig:improvement}, where test horizon is $H = 25$ and 50 independent tests are started from no adoption in the network.

\begin{figure}[ht]
    \centering
    \includegraphics[width=0.45\linewidth]{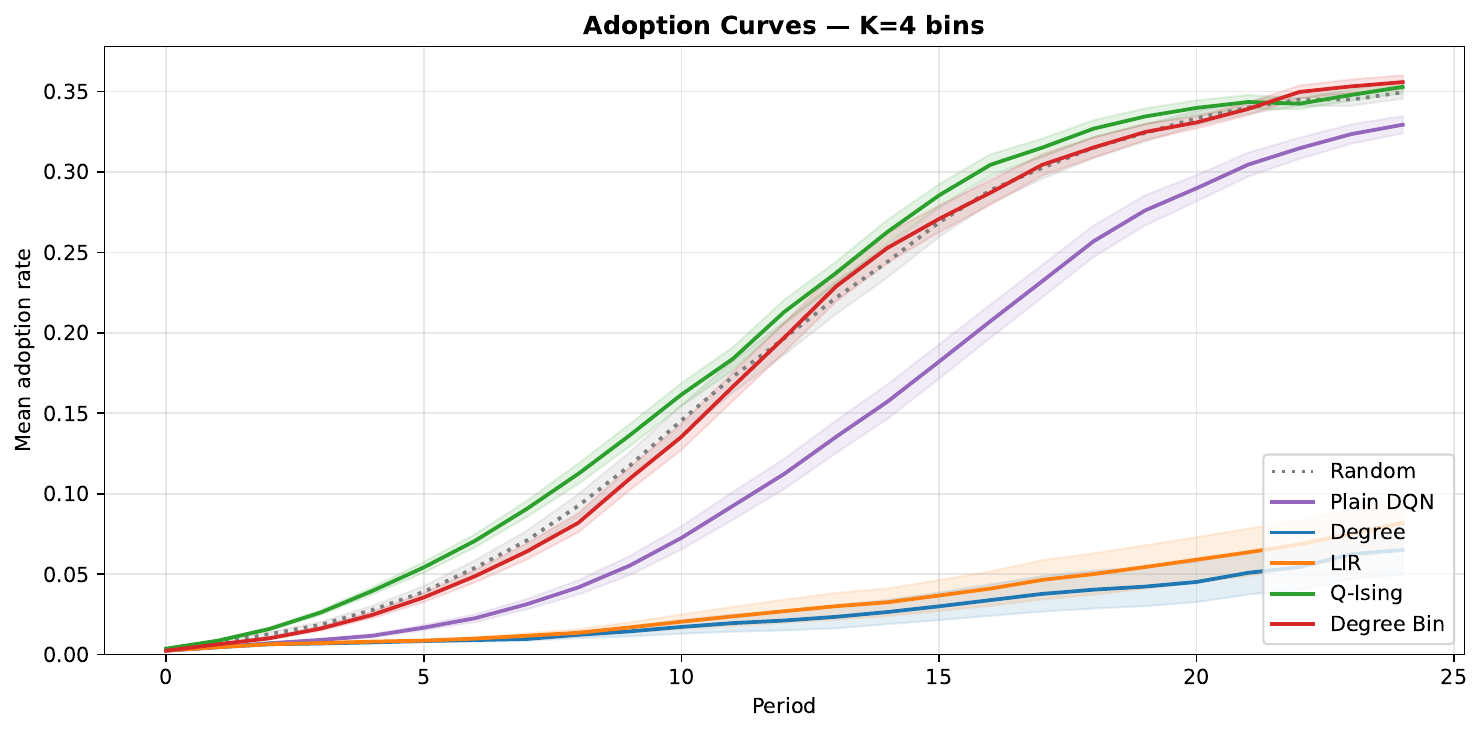}
    \includegraphics[width=0.45\linewidth]{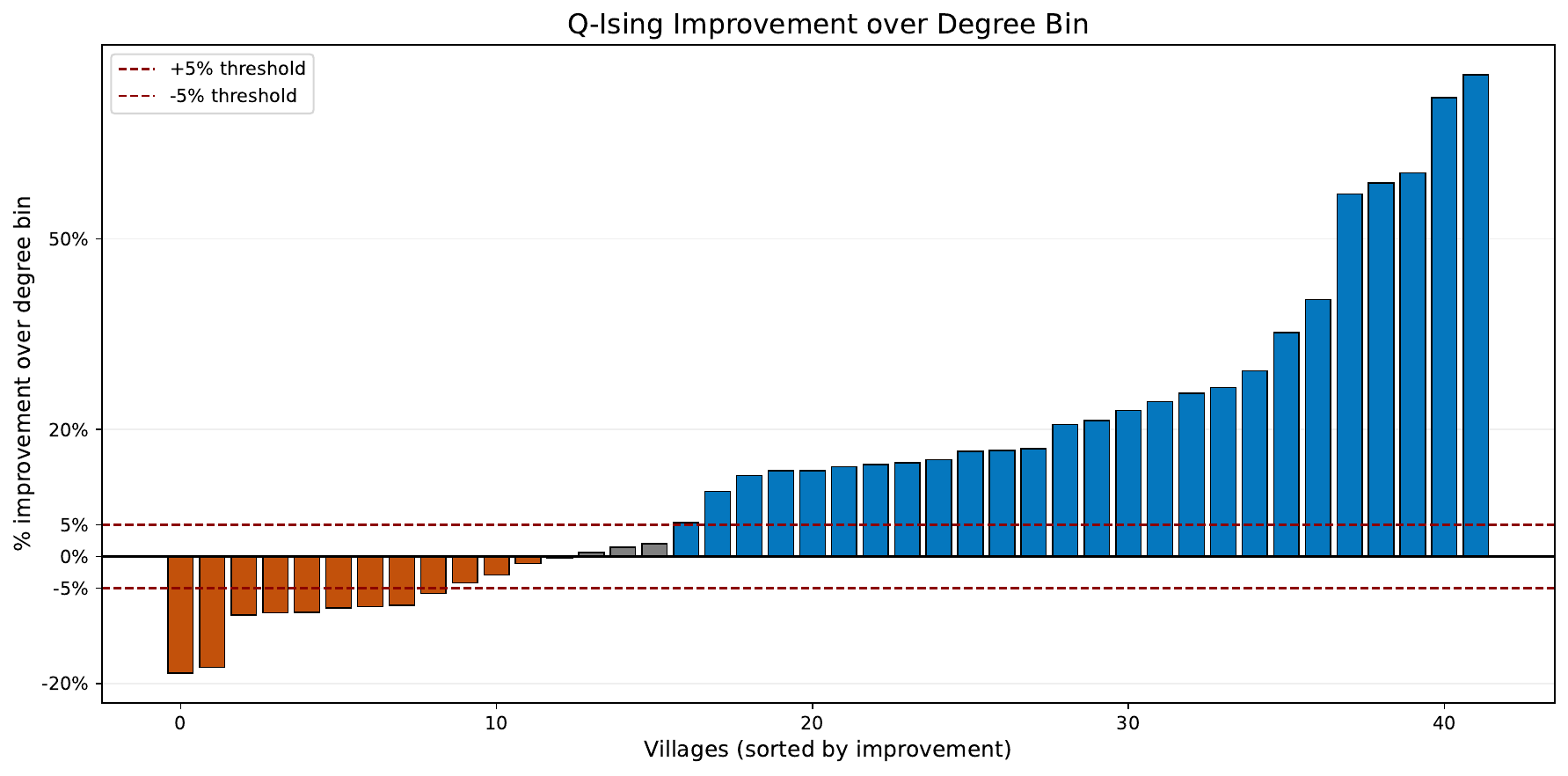}
    \caption{On the left: The mean period reward from different policies over time averaged over 50 test episodes with standard deviations shaded on the SBM data. On the right: The percentage improvement of Q-Ising over the best topological alternative (degree-bin) in microfinance Villages over 50 test runs.}
    \label{fig:improvement}
\end{figure}

In this set-up, random policy outperforms most of the topological heuristics because it seeds the highly-susceptible blocks by chance occasionally. So, it matches
the right target more often than a method deterministically committed to
the wrong one. Plain DQN learns a qualitatively similar seed-priority ordering to Q-Ising but has a slow initial ramp. This suggests that Ising augmentation is most valuable in the early campaign window, in addition to providing posterior ensemble policies, interpretable parameters and uncertainty quantification.

\subsection{Experiments in Indian Microfinance Villages}
\label{subsec:villages}

We use the empirical adjacency matrices of Indian microfinance villages to provide realistic clustering and degree distributions while simulating SIS dynamics to maintain control over group heterogeneity. To define the bins, we use the edge-betweenness based clustering algorithm. We only consider villages with more than one cluster, 42 of 43 villages satisfy this condition. If an identified cluster has less than 10 nodes, these nodes are considered to be the part of the largest cluster. Each bin gets assigned an unobserved spread and churn rate. The planner observes a single offline panel of $T_{\mathrm{train}} = 500$ periods collected under a uniform random bin policy for each of these networks. The details of the experimental set-up can be found in Appendix \ref{app:experiment}. Evaluation uses $50$ independent test runs over test horizon $H = 25$ initialized from no adoption.

Table ~\ref{tab:village_results} in Appendix~\ref{app:tables} reports the results. A descriptive figure of performance improvement of Q-Ising over the best non-adaptive policy which is Degree-bin can be found in Figure ~\ref{fig:improvement}. Q-Ising effectively learns to concentrates its initial treatments on communities with high spread rates and communities that are well connected with others, generating compounding organic spread. Then, it adaptively treats other communities depending on the activity levels. Its performance improvement is more pronounced in villages with low modularity because Q-Ising relies on identifying influential communities and adaptively targeting the remaining ones. In highly modular networks, spillovers compound less, diminishing the advantage of Q-Ising. 

\begin{figure}
    \centering
    \includegraphics[width=1\linewidth]{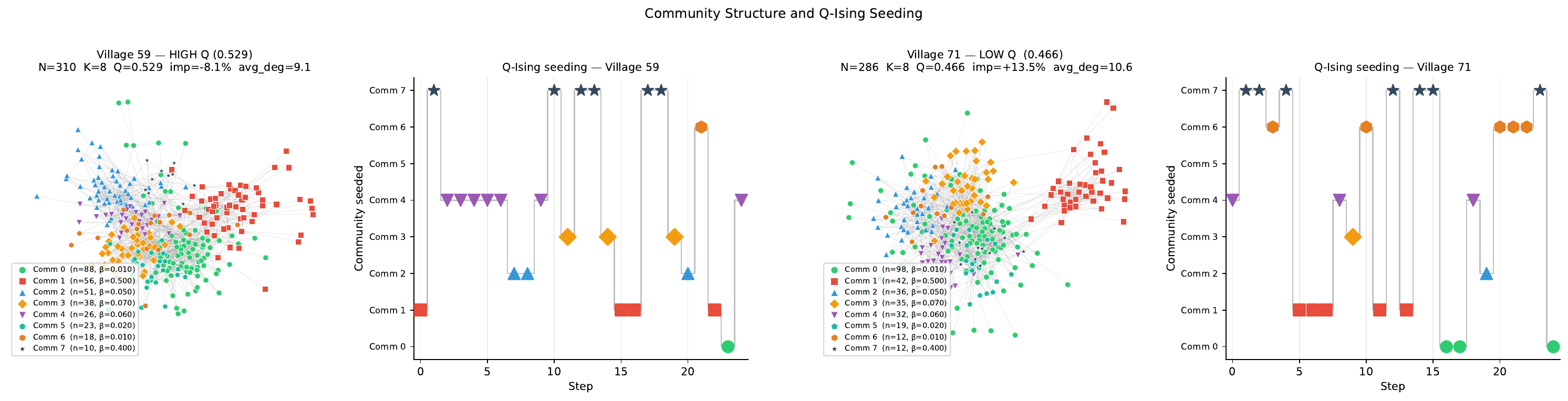}
    \caption{Trajectory differences across two different networks. The village on the right has a more modular community structure, so identified communities do not spread to other communities as much as the village on the left. 
     The correlation between Q-Ising's relative performance improvement over Degree-Bin with the village's modularity is approximately -0.5.}
    \label{fig:structure_trajectory}
\end{figure}

In addition to the performance improvement, Q-Ising reveals important features about the underlying mechanism through its estimated coefficients. The intercepts $\beta_0$ are negative for all bins, reflecting a strong baseline non-adoption tendency. Treatment effects $\beta_1$ are strongly positive for all the bins, recovering the "perfect treatment" structure of the SIS model. Persistence coefficients $\beta_2$ are positive, capturing adoption stickiness; communities with higher churn rates show weaker persistence. The dominant estimated couplings originate from communities with high spread rates. This interpretable analysis is not possible with other black-box policy methods. Figures about Ising fit quality and posterior distributions can be found in Appendix ~\ref{app:figures}.

With the ensemble policy approach, Q-Ising can also learn the policy uncertainty. The parameter estimation takes approximately 1 minute of wall clock time with 500 iterations. The training of the 20 agents corresponding to 20 posterior draws takes around 20 minutes. In general, the majority-vote path starts with near-unanimous agreement on communities with high spread rate. After the spreader communities has largely saturated, the majority vote shifts toward multiple alternatives. This dispersion identifies periods
near a critical threshold and serves as a measure of strategic uncertainty that a point-estimate policy would silently suppress. Figures showing the ensemble policy trajectory for an example village can be found in Appendix ~\ref{app:figures}.

\section{Conclusion}
\label{sec:conclusion}

This paper develops  Q-Ising, a framework for dynamic treatment allocation under network interference from observational panel data. This is one of the first attempts to offline dynamic policy learning in networks despite its relevance for public health, microfinance, and other settings where experimentation is costly or infeasible. Our approach combines a Bayesian Ising model of network dynamics with conservative Q-learning, yielding a policy that comes with structural parameter estimates and posterior uncertainty while remaining competitive with model-free offline RL.

There are many refinements possible. The framework currently seeds a single node per period; multi-node budget extensions are open directions. The method also requires a sufficiently long observational panel for adequate state-action coverage, and its behavior under severe distributional shift between the behavior policy and the target policy requires further study. For state augmentation with network information, higher order Ising interactions or other network representations such as graph neural networks can be used when there is sufficient data.  Our experiments demonstrate the framework on SIS dynamics, but the pipeline accommodates to other contagion models that use synchronous updating.
Positive applications include public health interventions, microfinance outreach, and information campaigns in low-resource settings. The same framework could be used to exploit social influence for commercial or political ends, and we encourage practitioners to reflect on deployment context accordingly.

\bibliographystyle{plainnat}
\bibliography{proposal}      

\appendix
\section{Tables}
\label{app:tables}
\begin{table}[H]
\centering
\caption{Comparison of related methods. The check mark the property is satisfied, cross indicates it is not, and -- indicates not applicable.}
\label{tab:comparison}
\small
\begin{tabular}{lcccc}
\toprule
\textbf{Method} & \textbf{Network} & \textbf{Dynamic} & \textbf{Data is} & \textbf{Uncertainty} \\
                & \textbf{interference} & \textbf{policy} & \textbf{observational} & \textbf{quantification} \\
\midrule
\multicolumn{5}{l}{\textit{Empirical welfare maximization}} \\[2pt]
\citet{kitagawa2018empirical}        & \xmark & \xmark & \cmark & \xmark \\
\citet{viviano2025policy}            & \cmark & \xmark & \cmark & \xmark \\
\midrule
\multicolumn{5}{l}{\textit{Topological policies}} \\[2pt]
\citet{kempe2003maximizing}          & \cmark & \xmark & -- & \xmark \\
\citet{banerjee2013diffusion}           & \cmark & \xmark & -- & \xmark \\
\citet{chen2009efficient}                & \cmark & \xmark & -- & \xmark \\
\citet{liu2017fast}           & \cmark & \xmark & -- & \xmark \\
\midrule
\multicolumn{5}{l}{\textit{Dynamic treatment regimes}} \\[2pt]
\citet{chakraborty2014dynamic}           & \xmark & \cmark & \cmark & \xmark \\
\citet{hu2025optimal}                & \xmark & \cmark & \cmark & \xmark \\
\citet{kitagawa2022policy}           & \xmark & \cmark & \cmark & \xmark \\
\midrule
\multicolumn{5}{l}{\textit{Network bandits}} \\[2pt]
\citet{herlihy2023networked}         & \cmark & \cmark & \xmark & \xmark \\
\citet{vaswani2015influence}         & \cmark & \cmark & \xmark & \xmark \\
\midrule
\multicolumn{5}{l}{\textit{GNN / simulation-based}} \\[2pt]
\citet{manchanda2020gcomb}           & \cmark & \xmark & \xmark & \xmark \\
\citet{meirom2021controlling}        & \cmark & \cmark & \xmark & \xmark \\
\midrule
\textbf{Q-Ising}              & \cmark & \cmark & \cmark & \cmark \\
\bottomrule
\end{tabular}
\end{table}

\setlength{\tabcolsep}{4pt}
\begingroup
\small
\setlength{\tabcolsep}{3.5pt}
\begin{longtable}{lcccccc}
\caption{Mean reward (mean $\pm$ std) for each seeding policy across villages. Bolding indicates policies within 1 std of the best performer. All values rounded to three decimal places.} \\
\label{tab:village_results} \\
\toprule
Village & Random & Degree & LIR & Degree-Bin & Plain DQN & Q-Ising \\
\midrule
\endfirsthead
\multicolumn{7}{c}{\tablename\ \thetable{} (continued)} \\
\toprule
Village & random & Degree & LIR & Degree-Bin & Plain DQN & Q-Ising \\
\midrule
\endhead
\midrule
\multicolumn{7}{r}{Continued on next page} \\
\endfoot
\bottomrule
\endlastfoot
0 & 0.040 (0.002) & 0.069 (0.002) & 0.069 (0.002) & \textbf{0.074 (0.002)} & 0.057 (0.002) & 0.061 (0.002) \\
1 & 0.015 (0.001) & 0.025 (0.001) & 0.023 (0.001) & \textbf{0.026 (0.001)} & \textbf{0.025 (0.001)} & 0.022 (0.001) \\
2 & 0.050 (0.003) & 0.066 (0.003) & 0.062 (0.003) & 0.088 (0.002) & \textbf{0.096 (0.003)} & 0.080 (0.003) \\
3 & 0.018 (0.001) & \textbf{0.028 (0.001)} & \textbf{0.027 (0.001)} & \textbf{0.028 (0.001)} & \textbf{0.028 (0.001)} & 0.026 (0.001) \\
4 & 0.042 (0.002) & 0.062 (0.002) & 0.061 (0.002) & \textbf{0.069 (0.002)} & 0.065 (0.002) & 0.063 (0.002) \\
5 & 0.020 (0.001) & 0.024 (0.001) & 0.024 (0.001) & \textbf{0.033 (0.001)} & 0.029 (0.001) & 0.031 (0.001) \\
6 & 0.018 (0.001) & \textbf{0.035 (0.001)} & \textbf{0.036 (0.001)} & 0.031 (0.001) & 0.029 (0.001) & 0.029 (0.001) \\
7 & 0.052 (0.003) & \textbf{0.096 (0.003)} & \textbf{0.094 (0.002)} & 0.084 (0.003) & 0.086 (0.003) & 0.077 (0.003) \\
8 & 0.017 (0.001) & \textbf{0.046 (0.001)} & \textbf{0.046 (0.001)} & 0.037 (0.001) & 0.038 (0.001) & 0.035 (0.001) \\
9 & 0.028 (0.001) & 0.028 (0.001) & 0.028 (0.001) & 0.044 (0.001) & \textbf{0.050 (0.001)} & 0.042 (0.001) \\
10 & 0.037 (0.002) & 0.053 (0.002) & 0.053 (0.002) & \textbf{0.069 (0.002)} & \textbf{0.066 (0.002)} & \textbf{0.067 (0.003)} \\
11 & 0.029 (0.002) & 0.031 (0.001) & 0.031 (0.001) & \textbf{0.054 (0.002)} & \textbf{0.055 (0.002)} & \textbf{0.054 (0.002)} \\
12 & 0.023 (0.001) & 0.031 (0.001) & 0.030 (0.001) & \textbf{0.039 (0.001)} & \textbf{0.040 (0.001)} & \textbf{0.039 (0.001)} \\
13 & 0.023 (0.001) & 0.026 (0.001) & 0.026 (0.001) & \textbf{0.044 (0.001)} & \textbf{0.043 (0.002)} & \textbf{0.044 (0.002)} \\
14 & 0.030 (0.002) & 0.035 (0.001) & 0.040 (0.002) & \textbf{0.051 (0.002)} & \textbf{0.055 (0.001)} & \textbf{0.052 (0.002)} \\
15 & 0.038 (0.002) & 0.053 (0.002) & 0.053 (0.002) & 0.060 (0.002) & \textbf{0.065 (0.002)} & 0.061 (0.002) \\
16 & 0.042 (0.003) & \textbf{0.107 (0.003)} & \textbf{0.107 (0.003)} & 0.084 (0.003) & 0.091 (0.003) & 0.089 (0.003) \\
17 & 0.043 (0.002) & 0.033 (0.001) & 0.037 (0.001) & 0.061 (0.001) & \textbf{0.067 (0.002)} & \textbf{0.067 (0.001)} \\
18 & 0.023 (0.001) & 0.029 (0.001) & 0.029 (0.001) & 0.040 (0.001) & \textbf{0.044 (0.001)} & \textbf{0.045 (0.001)} \\
19 & 0.027 (0.001) & 0.033 (0.001) & 0.033 (0.001) & 0.041 (0.001) & \textbf{0.047 (0.001)} & \textbf{0.046 (0.001)} \\
20 & 0.055 (0.004) & 0.075 (0.003) & 0.072 (0.004) & 0.082 (0.003) & \textbf{0.097 (0.003)} & \textbf{0.093 (0.003)} \\
21 & 0.023 (0.001) & 0.042 (0.001) & \textbf{0.044 (0.001)} & 0.038 (0.001) & \textbf{0.046 (0.002)} & \textbf{0.044 (0.002)} \\
22 & 0.026 (0.001) & \textbf{0.052 (0.002)} & \textbf{0.052 (0.001)} & 0.044 (0.002) & \textbf{0.050 (0.002)} & \textbf{0.050 (0.002)} \\
23 & 0.051 (0.002) & 0.061 (0.002) & 0.061 (0.002) & 0.068 (0.002) & \textbf{0.085 (0.002)} & 0.078 (0.002) \\
24 & 0.037 (0.002) & 0.056 (0.002) & 0.056 (0.002) & 0.056 (0.002) & \textbf{0.066 (0.002)} & \textbf{0.064 (0.002)} \\
25 & 0.040 (0.003) & 0.029 (0.001) & 0.025 (0.001) & 0.056 (0.002) & \textbf{0.065 (0.002)} & \textbf{0.065 (0.002)} \\
26 & 0.021 (0.001) & 0.037 (0.001) & 0.039 (0.001) & 0.036 (0.001) & \textbf{0.041 (0.001)} & \textbf{0.042 (0.001)} \\
27 & 0.025 (0.001) & 0.035 (0.001) & 0.034 (0.001) & 0.036 (0.001) & \textbf{0.042 (0.001)} & \textbf{0.042 (0.001)} \\
28 & 0.054 (0.003) & 0.076 (0.002) & 0.081 (0.002) & 0.082 (0.002) & \textbf{0.103 (0.003)} & \textbf{0.099 (0.002)} \\
29 & 0.035 (0.002) & 0.053 (0.001) & 0.053 (0.001) & 0.049 (0.002) & \textbf{0.060 (0.002)} & \textbf{0.060 (0.001)} \\
30 & 0.043 (0.002) & 0.049 (0.001) & 0.049 (0.001) & 0.063 (0.002) & \textbf{0.078 (0.002)} & \textbf{0.078 (0.001)} \\
31 & 0.035 (0.002) & \textbf{0.078 (0.002)} & \textbf{0.078 (0.002)} & 0.054 (0.002) & 0.061 (0.002) & 0.067 (0.002) \\
32 & 0.034 (0.001) & 0.041 (0.001) & 0.046 (0.002) & 0.055 (0.002) & \textbf{0.068 (0.001)} & \textbf{0.069 (0.002)} \\
33 & 0.057 (0.003) & 0.066 (0.002) & 0.066 (0.002) & 0.078 (0.002) & \textbf{0.108 (0.002)} & 0.098 (0.003) \\
34 & 0.040 (0.002) & 0.059 (0.002) & 0.056 (0.002) & 0.055 (0.002) & \textbf{0.068 (0.002)} & \textbf{0.071 (0.002)} \\
35 & 0.045 (0.002) & 0.050 (0.001) & 0.050 (0.001) & 0.063 (0.001) & \textbf{0.090 (0.002)} & 0.086 (0.002) \\
36 & 0.046 (0.002) & 0.038 (0.001) & 0.038 (0.001) & 0.063 (0.002) & \textbf{0.088 (0.002)} & \textbf{0.088 (0.002)} \\
37 & 0.069 (0.003) & 0.082 (0.002) & 0.083 (0.002) & 0.079 (0.002) & \textbf{0.128 (0.002)} & \textbf{0.124 (0.002)} \\
38 & 0.035 (0.001) & 0.035 (0.001) & 0.035 (0.001) & 0.046 (0.001) & \textbf{0.071 (0.002)} & \textbf{0.073 (0.002)} \\
39 & 0.052 (0.002) & 0.068 (0.002) & 0.068 (0.002) & 0.067 (0.002) & 0.103 (0.002) & \textbf{0.107 (0.002)} \\
40 & 0.052 (0.002) & 0.021 (0.001) & 0.021 (0.001) & 0.047 (0.001) & \textbf{0.079 (0.001)} & \textbf{0.080 (0.001)} \\
41 & 0.023 (0.001) & 0.012 (0.000) & 0.012 (0.000) & 0.026 (0.001) & 0.045 (0.001) & \textbf{0.046 (0.001)} \\
\end{longtable}

\endgroup

\section{Additional Figures}
\label{app:figures}

\begin{figure}[H]
    \centering
    \includegraphics[width=0.8\linewidth]{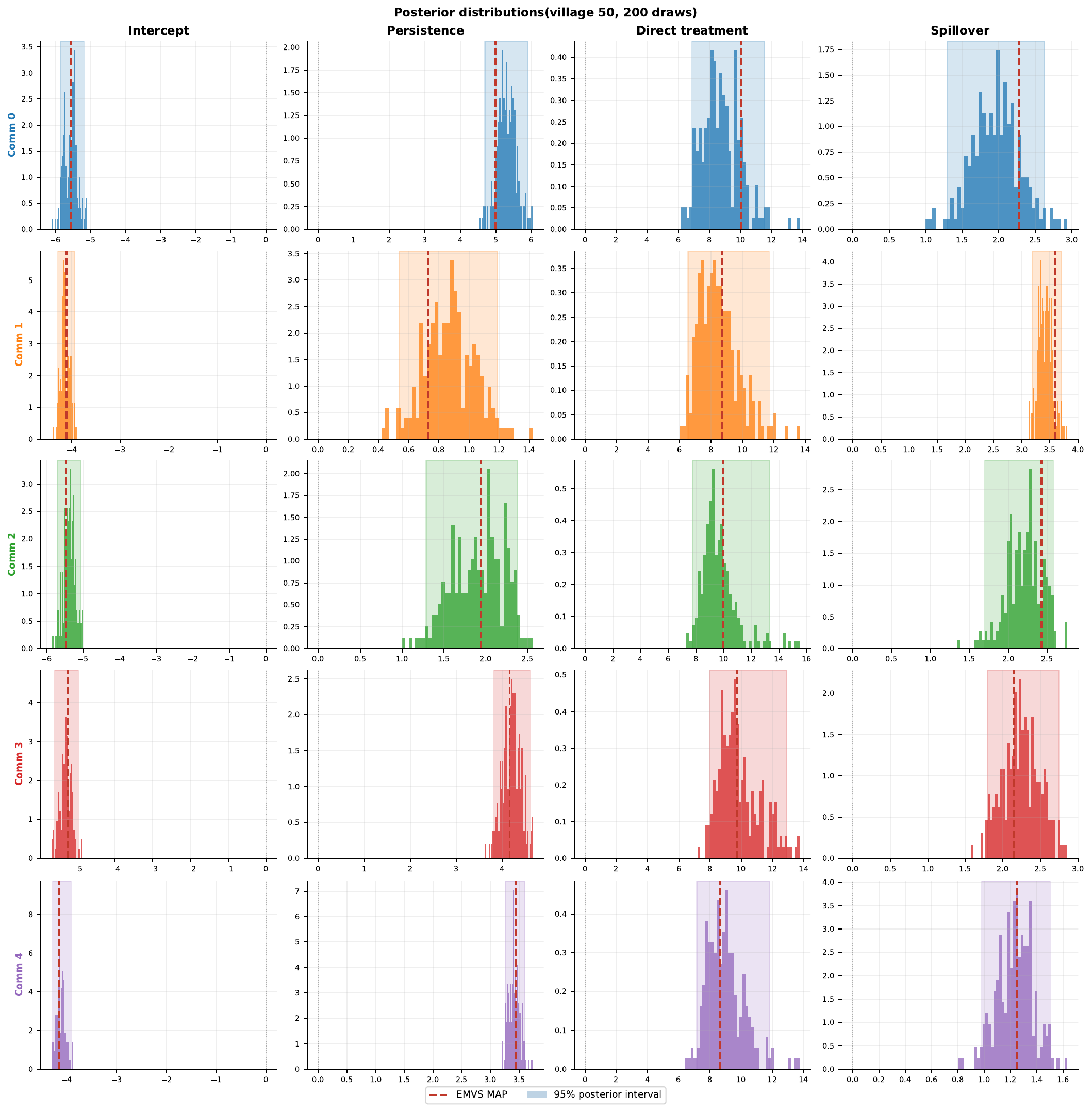}
    \includegraphics[width=0.8\linewidth]{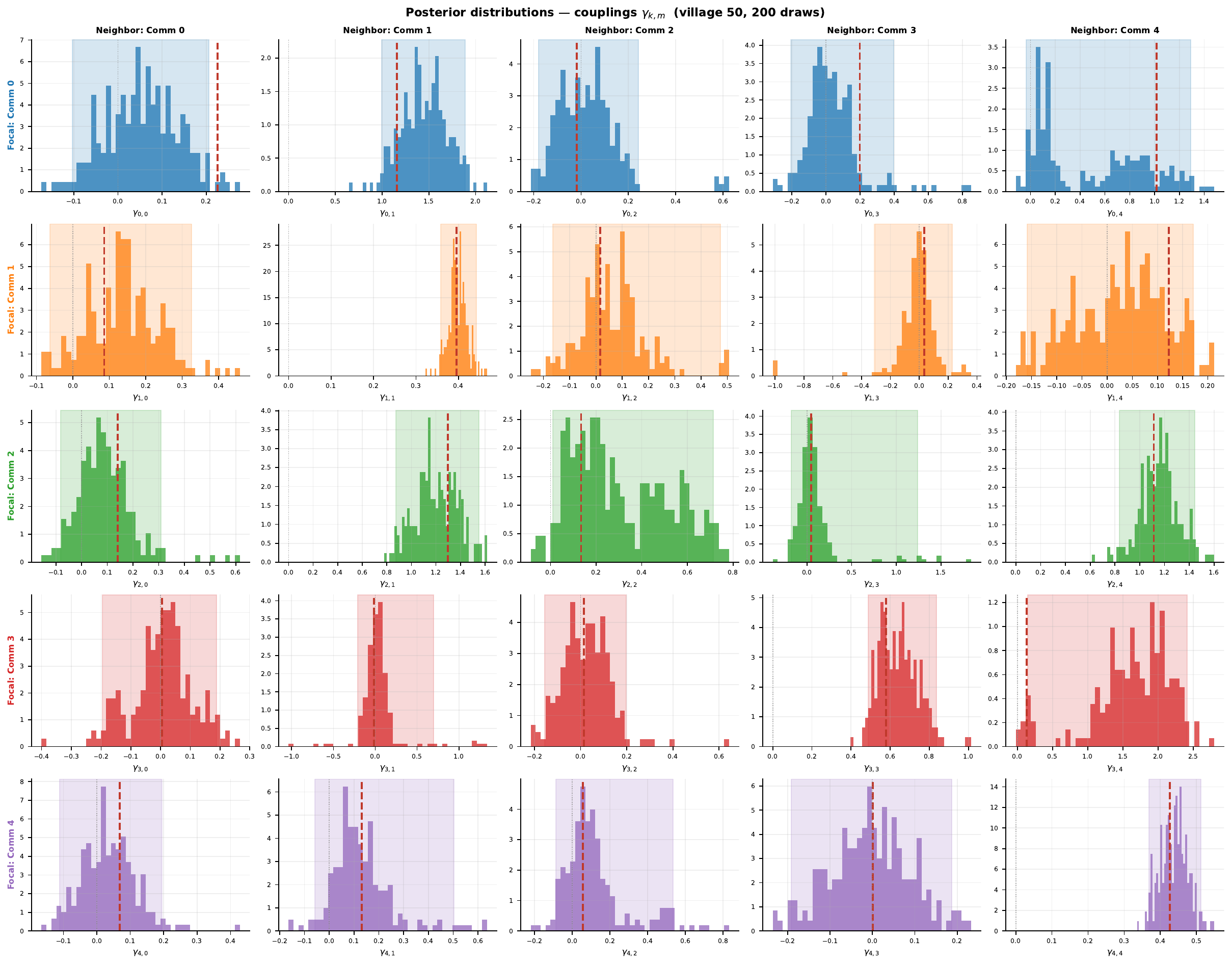}
    \caption{The posterior distribution of dynamic Ising parameters estimated by MCMC for Village 50.}
    \label{fig:posterior}
\end{figure}

\begin{figure}[H]
    \centering
    \includegraphics[width=1\linewidth]{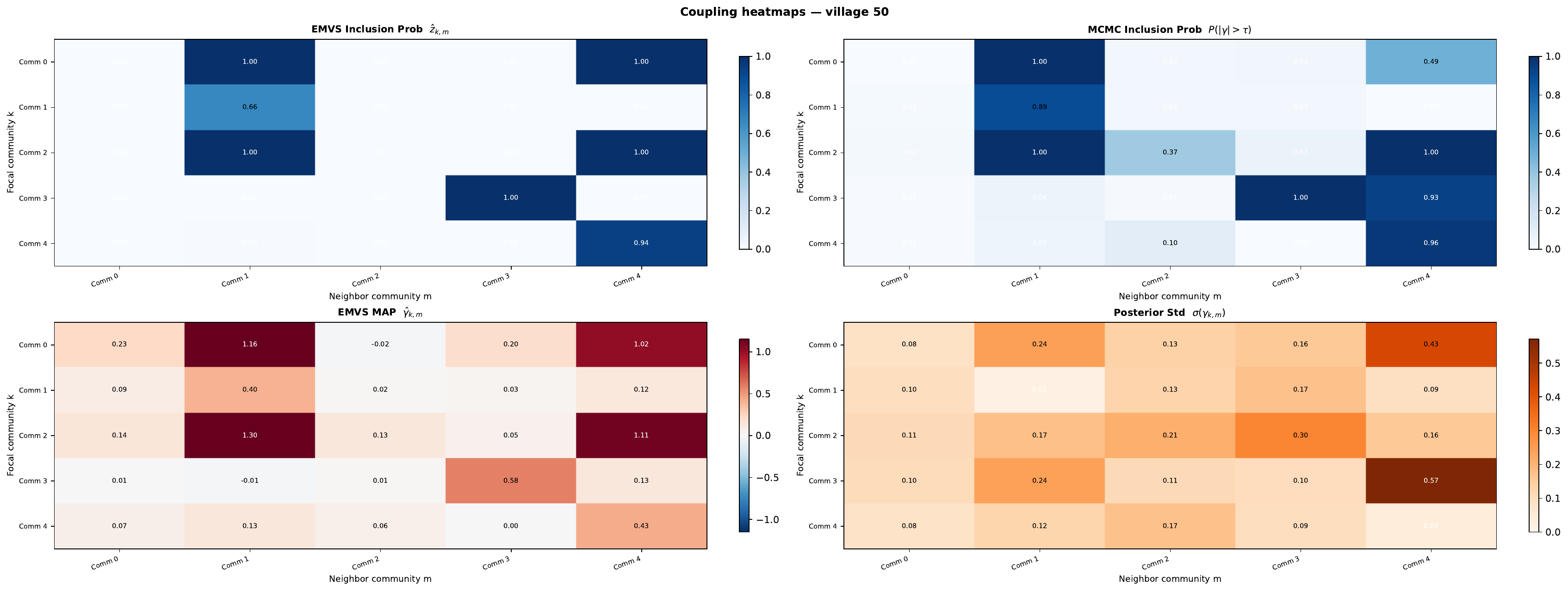}
    \caption{Estimated inclusion probabilities for coupling parameters for Village 50 by EMVS and MCMC, MAP estimates for the couplings and posterior standard deviations.}
    \label{fig:coupling}
\end{figure}

\begin{figure}[H]
    \centering
    \includegraphics[width=0.75\linewidth]{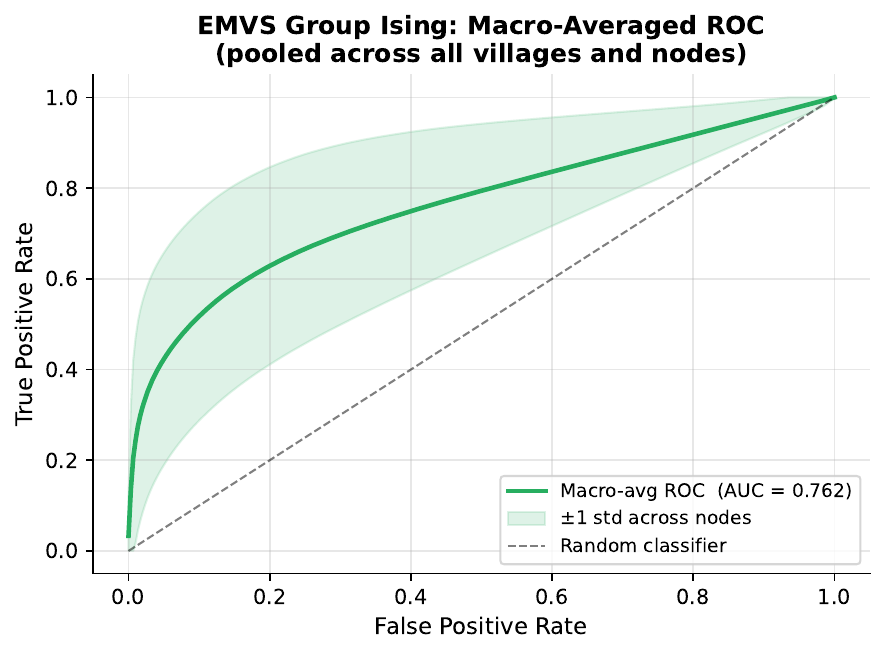}
    \caption{The AUC of pooled nodes for the microfinance villages.}
    \label{fig:auc}
\end{figure}

\begin{figure}[H]
    \centering
    \includegraphics[width=1\linewidth]{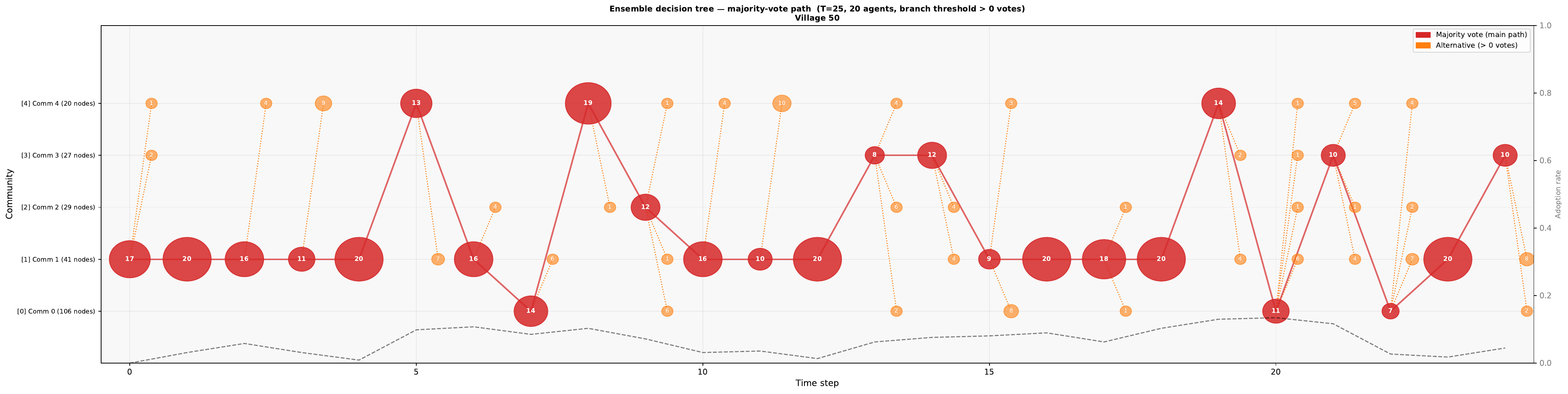}
    \caption{MCMC ensemble majority-vote path for Village 50.
      Bubble area encodes the number of agents voting for each action;
      orange bubbles mark alternatives receiving more than one vote.
      Concentrated bubbles in early periods reflect high posterior
      agreement on treating certain communities, dispersed votes later on signal near-equivalence of strategies once spreader communities saturated.}
    \label{fig:ensemble}
\end{figure}

\section{Proofs}
\label{app:proofs}

\subsection{Proof of Greedy Suboptimality (Proposition \ref{prop:heur_sub}) }
\label{app:greedy_sub}

\begin{proof}
    We provide a counterexample under basic SIS dynamics. Consider a graph $\mathcal{G}$ with nodes $\{A, B, C\}$. Nodes $A$ and $B$ are connected while $C$ is isolated. The connected nodes $\{A, B\}$ has spread rate $\rho > 0$ and a churn rate of $1$, meaning that they stay adopted only for one period, while node $C$ has a churn rate of $0$, meaning it stays adopted once treated. Let's assume the planner wants to maximize total adoption over $T=2$ periods. A greedy policy would treat one of the connected nodes because the expected immediate reward for treating $A$ is higher than treating $C$. $\mathbb{E}[R_{A,1}] = 1 + \rho > \mathbb{E}[R_{C,1}] = 1$.  However, at $t=2$ both of these nodes would churn unless they are re-seeded. The optimal policy treats $C$ first which does not require any maintenance in future periods, and then treats any one of the connected nodes resulting in $\mathbb{E}[R_{C,1} + R_{A,2}] = 1 + 1 + 1 + \rho =  3 + \rho$. The greedy policy can achieve $\mathbb{E}[R_{A,1} + R_{B,2}] = 2 + 2\rho$. In this example, the greedy policy is suboptimal. 
\end{proof}

\subsection{Proof of Dynamic Ising Remark}
\label{app:ising_proofs}

\begin{remark}[Synchronous Updates Are Not Element-wise Representable]
\label{prop:sync_not_ising}
Consider the dynamic Ising model on $N \geq 4$ nodes under no treatment 
($\beta_{2,i} = 0$) and symmetric interactions $\gamma_{ij} = \gamma_{ji}$. 
Under synchronous updating, the stationary distribution cannot in general 
be expressed as an element-wise Gibbs measure 
$\mu(y) \propto \exp\big(\sum_i h_i y_i + \sum_{i<j} \beta_{ij} y_i y_j\big)$.
\end{remark}

\begin{proof}
Since $\sigma$ maps $\mathbb{R}$ to $(0,1)$, every transition probability is 
strictly positive, so the chain admits a unique stationary distribution $\mu$. We give a counterexample to show that this stationary distribution cannot be in general represented as an element-wise Gibbs measure. 

Take $N=4$ on the complete graph with $\beta_{0,i}=0$ and $\gamma_{ij}=1$ 
for all $i\neq j$. Under synchronous updates, all nodes flip independently 
given the current state $y$, so the transition kernel factorizes:
\[
K(y \to y') = \prod_{i=1}^{4} \sigma\!\Big(\textstyle\sum_{j\neq i} y_j\Big)^{y'_i}
\Big(1 - \sigma\!\big(\textstyle\sum_{j\neq i} y_j\big)\Big)^{1-y'_i}
\]
where each factor is the probability that node $i$ updates to $y'_i$ given 
that it sees field $\sum_{j\neq i} y_j$. This defines the $16\times 16$ 
matrix $K$ on $\{0,1\}^4$, and $\mu$ is its unique left eigenvector with 
eigenvalue $1$, normalized to a probability distribution.

By permutation symmetry, $\mu(y)$ depends only on the weight $m = \sum_i y_i$; 
write $\mu_m$ for this common value. Any element-wise Gibbs measure 
reproducing $\mu$ must share this symmetry, forcing uniform field $h$ and 
uniform coupling $\beta$, so that 
$\mu_m \propto \exp\big(hm + \beta\binom{m}{2}\big)$. Taking log-ratios,
\[
\Delta_m := \log\frac{\mu_{m+1}}{\mu_m} 
= h + \beta\Big[\tbinom{m+1}{2} - \tbinom{m}{2}\Big] 
= h + \beta m,
\]
which is affine in $m$. Solving $\boldsymbol{\mu}K = \boldsymbol{\mu}$ 
numerically yields
\[
\Delta_0 \approx 1.860,\quad \Delta_1 \approx 2.247,\quad 
\Delta_2 \approx 2.549,\quad \Delta_3 \approx 2.765,
\]
with successive differences $0.387,\ 0.302,\ 0.216$. These are not constant, 
so $\Delta_m$ is not affine in $m$, and no element-wise Gibbs measure can 
reproduce $\mu$.

For $N > 4$, embed the four-node complete graph and set all remaining 
couplings to zero. The disconnected nodes evolve independently of the 
embedded clique, so the marginal of $\mu$ on the clique coincides with the 
four-node stationary distribution above. If $\mu$ were element-wise Gibbs 
on $N$ nodes, the Hamiltonian would split, and marginalizing out the 
disconnected nodes would yield an element-wise Gibbs measure on the 
clique.
\end{proof}

\section{Regret Analysis} \label{sec:ra-proof}

\subsection{Notations Setup} \label{subsec:proof-notation}

We review and define additional notations that are useful for regret analysis.  The main text states the guarantee for a compact PEVI analogue of Q-Ising. Here we make the PEVI recursion and the aggregate processes explicit.

\paragraph{Oracle and estimated Q-Ising states}
Let $Z_{\tau,h}$ denote the pre-action full-network history at stage $h$
of effective training block $\tau$. The oracle and estimated Q-Ising states
are $s_{\tau,h}^\star = \phi_{\theta_0}(Z_{\tau,h}),$ and $\widehat s_{\tau,h}
=\phi_{\widehat\theta}(Z_{\tau,h}).$ For the concentration arguments below and in the lemma section, $\widehat s_{\tau,h}$ should be read as a cross-fitted state, and we suppress the fold index in the notation. This is a proof device. In practice, the empirical implementation may use the full first-stage fit.

For any state-based policy $\pi=(\pi_1,\ldots,\pi_H)$, with
$\pi_h:\mathcal S\to[K]$, its oracle and plug-in full-history deployments are
$\pi_h^\circ(z)=\pi_h(\phi_{\theta_0}(z))$ and $\pi_h^{\mathrm{plug}}(z)=\pi_h(\phi_{\widehat\theta}(z)).$
The theorem in the main text analyzes oracle deployment. 

\paragraph{Effective sample size}
For regret analysis, we represent the offline panel as stage-wise transition sets
$$
\mathcal D_h=\{(\widehat s_{\tau,h},b_{\tau,h},r_{\tau,h},
      \widehat s_{\tau,h+1})\}_{\tau\in\mathcal I_h},
  \qquad h=1,\ldots,H,
$$
where $\tau$ indexes effective training blocks from the training panel. Let $n_h:=|\mathcal I_h|$ denote the effective number of approximately independent transitions at stage
$h$, after accounting for temporal dependence. We write 
\begin{equation*}
    n_{\mathrm{eff}}:=\min_{h\in[H]} n_h,
  \qquad
  n_{\log}:=\max_{h\in[H]} n_h .
\end{equation*}
The minimum $n_{\mathrm{eff}}$ controls simplified coverage rates, whereas $n_{\log}$ appears only inside the logarithmic confidence radius in the
self-normalized regression bound. Under balanced blocking,
$n_h=n_{\mathrm{eff}}=n_{\log}$ for all $h$. 

\paragraph{PEVI Regressions Review \citep{jin2021pessimism}} Recall that $\varphi(\widehat s_{\tau,h},b_{\tau,h})$ is a normalized feature map with $\| \varphi(s,b)\|_2\le 1$.  let $[x]_{[0,c]}=\min\{\max\{x,0\},c\}$. Setting
$\widehat V_{H+1}=0$, PEVI computes, for $h=H,\ldots,1$, in a backward fashion
\begin{align*}
&\Lambda_h = \lambda I_{d_{\phi}}+ \sum_{\tau\in\mathcal I_h} \varphi(\widehat s_{\tau,h},b_{\tau,h})
  \varphi(\widehat s_{\tau,h},b_{\tau,h})^\top, \quad  \widehat w_h = \Lambda_h^{-1} \sum_{\tau\in\mathcal I_h} \varphi(\widehat s_{\tau,h},b_{\tau,h}) \{r_{\tau,h}+\widehat V_{h+1}(\widehat s_{\tau,h+1})\}  \\
  &\widehat Q_h(s,b) = \left[ \varphi(s,b)^\top\widehat w_h - \beta \sqrt{\varphi(s,b)^\top\Lambda_h^{-1}\varphi(s,b)} \right]_{[0,H-h+1]}, \qquad \widehat V_h(s)=\max_b \widehat Q_h(s,b).
\end{align*}
The learned PEVI policy is $ \widehat\pi_h(s)\in\arg\max_{b\in[K]}\widehat Q_h(s,b)$. Its oracle and plug-in full-history deployments are $ \widehat\pi_h^\circ(z)= \widehat\pi_h(\phi_{\theta_0}(z))$ and $\widehat\pi_h^{\mathrm{plug}}(z) = \widehat\pi_h(\phi_{\widehat\theta}(z))$ respectively.

\paragraph{Aggregate MDP and value-to-go} 
Let $\mathcal M^S$ be the target oracle aggregate MDP over Q-Ising states, where a chosen bin is implemented by uniform within-bin randomization. Let $\mathcal M^\mu$ be the logged oracle aggregate MDP induced by the historical within-bin selector after node actions are mapped to bins. For $\mathcal M\in\{\mathcal M^S,\mathcal M^\mu\}$, write
$r_h^{\mathcal M}$ and $P_h^{\mathcal M}$ for the reward and transition kernel, and define
$$ (\mathcal T_h^{\mathcal M}V)(s,b)=
  r_h^{\mathcal M}(s,b)+ \mathbb E_{s'\sim P_h^{\mathcal M}(\cdot\mid s,b)}[V(s')].$$
For $\mathcal M\in\{\mathcal M^S,\mathcal M^\mu\}$ and
$\pi\in\Pi_S$, define the aggregate value-to-go by
$$V_h^{\mathcal M,\pi}(s) := \mathbb E_{\mathcal M}^{\pi} \left[ \sum_{u=h}^H r_u^{\mathcal M}(s_u,b_u) \,\middle|\, s_h=s \right],\qquad
V_{H+1}^{\mathcal M,\pi}=0.$$
When $\mathcal M=\mathcal M^S$, we write $V_h^{S,\pi}$; when $\mathcal M=\mathcal M^\mu$, we write $V_h^{\mu,\pi}$.

Finally, we note that the PEVI uncertainty of a policy $\pi$ under the logged aggregate MDP is
$$\mathfrak U_{\mathcal D}^{\mu}(\pi)
=
\sum_{h=1}^{H}
\mathbb E_{\mathcal M^\mu}^{\pi}
\left[
\sqrt{
\varphi(s_h,b_h)^\top
\Lambda_h^{-1}
\varphi(s_h,b_h)}
\right].$$

\subsection{Regularity Assumptions} \label{subsec:ra-assumptions}

We state the regularity conditions used in the regret proof. In words, Assumption~\ref{ass:valid_offline_compact} handles
causal validity and the effective blocked sample; Assumption~\ref{ass:qising_abstraction_compact}
handles network-to-state abstraction and logged-versus-target within-bin
selection; Assumption~\ref{ass:first_stage_compact} handles first-stage
Q-Ising estimation; and Assumption~\ref{ass:linear_bellman_compact} handles
approximate linear PEVI regression.

For each stage $h$ and each value function $V$ appearing in the PEVI
backward recursion, define the regression target
$$Y_{\tau,h}(V) := r_{\tau,h}+V(\widehat s_{\tau,h+1}).$$
The assumptions below imply that $Y_{\tau,h}(V)$ admits the decomposition
\begin{equation}
Y_{\tau,h}(V) =
x_{\tau,h}^\top w_h(V) + \xi_{\tau,h}(V)
+ e_{\tau,h}^{\mathrm{lin}}(V) + e_{\tau,h}^{\theta}(V),
\label{eq:regression_residual_decomposition}
\end{equation}
where $\xi_{\tau,h}(V)$ is the stochastic Bellman noise, $e_{\tau,h}^{\mathrm{lin}}(V)$ is the Bellman linear-approximation residual,
and $e_{\tau,h}^{\theta}(V)$ is the additional residual induced by using
estimated Q-Ising states instead of oracle Q-Ising states.

\begin{assumption}[Causal validity and effective PEVI Samples]
\label{ass:valid_offline_compact}
Logged actions are valid interventions. For each stage $h$,
\[
a_h\perp \{(Z_{h+1}(a),r_h(a)):a\in[N]\}\mid Z_h,
\qquad
(Z_{h+1},r_h)=(Z_{h+1}(a_h),r_h(a_h)).
\]
The training panel can be represented by effective stage-wise
blocks $\{\mathcal I_h\}_{h=1}^H$. For each stage $h$, after conditioning
on the data used to construct the cross-fitted Q-Ising states and the next-stage PEVI value function $\widehat V_{h+1}$, there exists a filtration
$\{\mathcal F_{\tau,h}\}_{\tau\in\mathcal I_h}$ such that
$x_{\tau,h}:=\varphi(\widehat s_{\tau,h},b_{\tau,h})$ is predictable and, for every value function $V$ appearing in
the PEVI recursion,
$$\mathbb E[\xi_{\tau,h}(V) \mid \mathcal F_{\tau-1,h} ]=0,$$
with $\xi_{\tau,h}(V)$ conditionally $H$-sub-Gaussian.

\end{assumption}

\begin{assumption}[Abstraction and binning]
\label{ass:qising_abstraction_compact}
Let $\|\cdot\|_{\mathrm{TV}}$ denote total variation distance. Let
$P_h^{Z,S}(\cdot\mid z,b)$ and $r_h^{Z,S}(z,b)$ be the full-network
transition law and reward under the target rule. Let
$\phi_{\theta_0\#}P_h^{Z,S}(\cdot\mid z,b)$ be the pushforward law of
$\phi_{\theta_0}(Z_{h+1})$. For all relevant $h,z,b$,
$$ \left| r_h^{Z,S}(z,b) -  r_h^S(\phi_{\theta_0}(z),b)
  \right| \le \varepsilon_r^{\mathrm{agg}},
  \qquad
  \left\| \phi_{\theta_0\#}P_h^{Z,S}(\cdot\mid z,b) -
    P_h^S(\cdot\mid \phi_{\theta_0}(z),b) \right\|_{\mathrm{TV}}
  \le \varepsilon_P^{\mathrm{agg}}. $$
Moreover, the target and logged aggregate MDPs are close:
$$ \sup_{h,s,b} |r_h^S(s,b)-r_h^\mu(s,b)| \le
  \varepsilon_r^{\mathrm{bin}}, \qquad
  \sup_{h,s,b} \|P_h^S(\cdot\mid s,b)-P_h^\mu(\cdot\mid s,b)\|_{\mathrm{TV}}
  \le \varepsilon_P^{\mathrm{bin}}.$$
If the historical policy also randomizes uniformly within the selected bin,
then $\varepsilon_r^{\mathrm{bin}}=\varepsilon_P^{\mathrm{bin}}=0$.
\end{assumption}

\begin{assumption}[First-stage Q-Ising stability]
\label{ass:first_stage_compact}
There exists an event $\mathcal E_\theta$, with $\mathbb P(\mathcal E_\theta)\ge1-\delta_\theta$, such that on $\mathcal E_\theta$, simultaneously over all stages $h$, all value functions $V$ appearing in the PEVI recursion, and all $(s,b)\in\mathcal S\times[K]$,
\begin{equation}
\left| \varphi(s,b)^\top \Lambda_h^{-1} \sum_{\tau\in\mathcal I_h}
x_{\tau,h}e_{\tau,h}^{\theta}(V) \right| \le
c_\theta L_{\mathrm{st}}\varepsilon_\theta .
\label{eq:projected_theta_residual}
\end{equation}
Here $L_{\mathrm{st}}$ is a stability constant controlling the sensitivity of
features, PEVI value functions, and Bellman targets to perturbations of the
Q-Ising state.
\end{assumption}

\begin{assumption}[Approximate linear Bellman regression] \label{ass:linear_bellman_compact}
For every stage $h$ and every value function $V$ appearing in the PEVI recursion, there exists $w_h(V)\in\mathbb R^{d_{\phi}}$, $\|w_h(V)\|_2\le W$, such that
$$\sup_{s,b} \left| (\mathcal T_h^\mu V)(s,b)-\varphi(s,b)^\top w_h(V) \right| \le \varepsilon_{\mathrm{lin}}.$$
In addition, the Bellman approximation residual is small after projection onto
the empirical ridge geometry:
\begin{equation}
\sup_{s,b} \left| \varphi(s,b)^\top \Lambda_h^{-1} \sum_{\tau\in\mathcal I_h}
x_{\tau,h}e_{\tau,h}^{\mathrm{lin}}(V) \right| \le
c_{\mathrm{lin}}\varepsilon_{\mathrm{lin}} .
\label{eq:projected_lin_residual}
\end{equation}
Exact linear MDPs satisfy this condition with
\(\varepsilon_{\mathrm{lin}}=0\).
\end{assumption}

\paragraph{Combining Assumption \ref{ass:first_stage_compact}-\ref{ass:linear_bellman_compact}} On the event \(\mathcal E_\theta\), define
$$e_{\tau,h}(V) := e_{\tau,h}^{\mathrm{lin}}(V)
+ e_{\tau,h}^{\theta}(V).$$
Combining Assumptions~\ref{ass:first_stage_compact} and
\ref{ass:linear_bellman_compact} gives, simultaneously over all stages $h$,
all PEVI value functions $V$, and all $(s,b)$,
\begin{equation}
\left| \varphi(s,b)^\top \Lambda_h^{-1}
\sum_{\tau\in\mathcal I_h} x_{\tau,h}e_{\tau,h}(V) \right|
\le c_{\mathrm{app}} \left( \varepsilon_{\mathrm{lin}}
+L_{\mathrm{st}}\varepsilon_\theta \right),
\label{eq:appendix_projected_residual}
\end{equation}
where $c_{\mathrm{app}}=c_{\mathrm{lin}}+c_\theta$.

In summary, Assumption~\ref{ass:valid_offline_compact} is the causal and statistical validity condition needed to treat the logged blocks as usable PEVI regression samples. The cross-fitting convention makes the estimated states $\widehat s_{\tau,h}$ predictable with respect to the Bellman noise, and the stage-wise conditioning handles the randomness of $\widehat V_{h+1}$. Assumption~\ref{ass:qising_abstraction_compact} elaborates the Q-Ising network-condition that explains the cost of abstractions and binning. Assumption~\ref{ass:first_stage_compact} gives the first-stage stability condition and defines such event. Finally, Assumption~\ref{ass:linear_bellman_compact} is a projected
misspecification condition, which requires the Bellman residual to be small in the PEVI ridge regression.

\subsection{Auxiliary Lemmas}

\begin{lemma}[Finite-horizon simulation lemma]
\label{lem:simulation_final}
Consider two finite-horizon MDPs  with the same state space, action space,
horizon, and initial state $x_1$:
$$\mathcal M^m = (\mathcal X,\mathcal A,\{P_h^m\}_{h=1}^H,\{r_h^m\}_{h=1}^H,H), \qquad m\in\{1,2\},$$
Further suppose that the rewards are bounded in $(0,1)$ and for all $h,x,a$,
\begin{align*}
     &|r_h^1(x,a)-r_h^2(x,a)| \le \varepsilon_r, \quad  \|P_h^1(\cdot\mid x,a)-P_h^2(\cdot\mid x,a)\|_{\mathrm{TV}} \le \varepsilon_P
\end{align*}
Then, for any Markov policy $\pi=(\pi_1,\ldots,\pi_H)$,
$$ |V_1^{1,\pi}(x_1)-V_1^{2,\pi}(x_1)| \le H\varepsilon_r+H^2\varepsilon_P.$$
\end{lemma}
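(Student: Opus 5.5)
The plan is to prove the finite-horizon simulation lemma (Lemma~\ref{lem:simulation_final}) by a backward induction on the value-to-go gap, using a telescoping (performance-difference style) decomposition. Fix the Markov policy $\pi$. For $m\in\{1,2\}$, write $V_h^{m,\pi}$ for its value-to-go in $\mathcal M^m$, with $V_{H+1}^{m,\pi}=0$. Also write $Q_h^{m,\pi}(x,a)=r_h^m(x,a)+\mathbb E_{x'\sim P_h^m(\cdot\mid x,a)}[V_{h+1}^{m,\pi}(x')]$. Since rewards lie in $(0,1)$, we have the a priori bound $0\le V_{h+1}^{m,\pi}\le H-h$. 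This range bound is what converts total-variation closeness of the kernels into closeness of expectations.

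Define $\Delta_h:=\sup_x|V_h^{1,\pi}(x)-V_h^{2,\pi}(x)|$. The goal is a recursion for $\Delta_h$. If $\pi$ is randomized, we average over $a\sim\pi_h(\cdot\mid x)$; otherwise we set $a=\pi_h(x)$. For fixed $x$ and $a$ we write
\begin{align*}
Q_h^{1,\pi}(x,a)-Q_h^{2,\pi}(x,a)
&= \bigl[r_h^1(x,a)-r_h^2(x,a)\bigr]
+\mathbb E_{P_h^1}\bigl[V_{h+1}^{1,\pi}-V_{h+1}^{2,\pi}\bigr]\\
&\quad+\bigl(\mathbb E_{P_h^1}-\mathbb E_{P_h^2}\bigr)\bigl[V_{h+1}^{2,\pi}\bigr].
\end{align*}
The first term is at most $\varepsilon_r$ in absolute value, and the second is at most $\Delta_{h+1}$.

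The third term is the step needing care. For any function $f$ with values in an interval of length $L$, we have $|\mathbb E_P f-\mathbb E_{P'}f|\le L\,\|P-P'\|_{\mathrm{TV}}$, after centering $f$ at the midpoint of its range. This holds under the $\sup_A|P(A)-P'(A)|$ convention; under the $L^1$ convention one gets a factor $2$ times half the range, which is the same. Applying it with $f=V_{h+1}^{2,\pi}\in[0,H-h]$ bounds the third term by $(H-h)\varepsilon_P\le H\varepsilon_P$.

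Combining the three bounds and averaging over the action gives the recursion
\[
\Delta_h\le \varepsilon_r+H\varepsilon_P+\Delta_{h+1},\qquad \Delta_{H+1}=0.
\]
Unrolling from $h=H$ down to $h=1$ yields $\Delta_1\le H\varepsilon_r+H^2\varepsilon_P$. Evaluating at $x_1$ gives the stated bound; the sharper intermediate form is $H\varepsilon_r+\tfrac{H(H-1)}{2}\varepsilon_P$.

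The main obstacle is modest: the TV-to-expectation step must use the range of $V_{h+1}^{2,\pi}$, not a crude sup-norm bound that would double the constant. Two checks are needed here. The first is the TV normalization, where either convention suffices because $H-h\le H$ leaves slack. The second is that $V^{2,\pi}_{h+1}$ is measurable and bounded, which follows from the bounded rewards and the Markov structure. No earlier result of the paper is needed; the lemma is self-contained and is later combined with Assumption~\ref{ass:qising_abstraction_compact} to produce the $\Delta_{\mathrm{agg}}$ and $\Delta_{\mathrm{bin}}$ terms in Theorem~\ref{thm:pevi_qising_compact}.
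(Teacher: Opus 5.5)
Your proposal is correct and follows the paper's proof essentially step for step. Both arguments run the backward recursion on $\Delta_h=\sup_x|V_h^{1,\pi}(x)-V_h^{2,\pi}(x)|$ by adding and subtracting $\mathbb E_{P_h^1}[V_{h+1}^{2,\pi}]$, bound the kernel term by $(H-h)\varepsilon_P$ using $0\le V_{h+1}^{2,\pi}\le H-h$, and unroll to $H\varepsilon_r+\tfrac{H(H-1)}{2}\varepsilon_P\le H\varepsilon_r+H^2\varepsilon_P$; your handling of randomized policies and the TV normalization are harmless additions.
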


\begin{proof}
This is a standard result in offline reinforcement learning (see e.g., \citet{10.1145/1102351.1102352}). For completeness, we provide a proof tailored to our MDP structures. For $m\in\{1,2\}$, let $V_h^{m,\pi}$ be the value-to-go of policy $\pi$ in  $\mathcal M^m$ with $V_{H+1}^{m,\pi}\equiv0$. Define
$$\Delta_h := \sup_x |V_h^{1,\pi}(x)-V_h^{2,\pi}(x)|.$$

Fix $h$ and $x$ and define $a=\pi_h(x)$. By the Bellman equations, we have 
$$|V_h^{1,\pi}(x)-V_h^{2,\pi}(x)| \le |r_h^1(x,a)-r_h^2(x,a)|+ \left|\mathbb E_{P_h^1(\cdot\mid x,a)}
[V_{h+1}^{1,\pi}(X')] - \mathbb E_{P_h^2(\cdot\mid x,a)}[V_{h+1}^{2,\pi}(X')]\right|.$$

Adding and subtracting $ \mathbb E_{P_h^1(\cdot\mid x,a)}[V_{h+1}^{2,\pi}(X')]$ on the right hand side of the equation gives
\begin{align*}
|V_h^{1,\pi}(x)-V_h^{2,\pi}(x)| &\le \varepsilon_r +
\mathbb E_{P_h^1(\cdot\mid x,a)} \left[ |V_{h+1}^{1,\pi}(X')-V_{h+1}^{2,\pi}(X')| \right] \\
&\quad+ \left| \mathbb E_{P_h^1(\cdot\mid x,a)} [V_{h+1}^{2,\pi}(X')] - \mathbb E_{P_h^2(\cdot\mid x,a)} [V_{h+1}^{2,\pi}(X')] \right|.
\end{align*}
Recall that for the total variation distance and for any measurable function $f$ satisfying $0\le f\le B$, we have $ \left| \mathbb E_P[f]-\mathbb E_Q[f] \right| \le B\|P-Q\|_{\mathrm{TV}}.$ Since rewards are in $[0,1]$, $ 0\le V_{h+1}^{2,\pi}(x')\le H-h$. Hence we have
$$\Delta_h \le \varepsilon_r+\Delta_{h+1}+(H-h)\varepsilon_P.$$
Since $\Delta_{H+1}=0$, unrolling the expression above from $H$ to $1$ gives
$$\Delta_1 \le H\varepsilon_r + \sum_{h=1}^H (H-h)\varepsilon_P = H\varepsilon_r + \frac{H(H-1)} {2}\varepsilon_P \le H\varepsilon_r+H^2\varepsilon_P.$$
\end{proof}

\begin{lemma}[Q-Ising abstraction transfer] \label{lem:abstraction_transfer_final}
Under Assumption \ref{ass:qising_abstraction_compact}, for every policy $\pi\in\Pi_S$ whose induced trajectory is covered by the local bounds in Assumption \ref{ass:qising_abstraction_compact},
$$\left| W_H^Z(\pi^\circ;Z_1) - V_1^{S,\pi}(s_1^\star) \right| \le \Delta_{\mathrm{agg}},$$
$$\left|V_1^{S,\pi}(s_1^\star) - V_1^{\mu,\pi}(s_1^\star) \right| \le \Delta_{\mathrm{bin}},$$
where $\Delta_{\mathrm{agg}} := H\varepsilon_r^{\mathrm{agg}} + H^2\varepsilon_P^{\mathrm{agg}}$ and $\Delta_{\mathrm{bin}} := H\varepsilon_r^{\mathrm{bin}}+ H^2\varepsilon_P^{\mathrm{bin}}$. Consequently, we have
$$\left| W_H^Z(\pi^\circ;Z_1) - V_1^{\mu,\pi}(s_1^\star) \right| \le \Delta_{\mathrm{abs}},$$
where $ \Delta_{\mathrm{abs}} := \Delta_{\mathrm{agg}}+\Delta_{\mathrm{bin}}.$
\end{lemma}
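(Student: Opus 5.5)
The plan is to obtain both bounds from the finite-horizon simulation lemma (Lemma~\ref{lem:simulation_final}) and then combine them with the triangle inequality. The second inequality is immediate. The target aggregate MDP $\mathcal M^S$ and the logged aggregate MDP $\mathcal M^\mu$ share the state space $\mathcal S$, the action set $[K]$, the horizon $H$ and the initial state $s_1^\star=\phi_{\theta_0}(Z_1)$. Assumption~\ref{ass:qising_abstraction_compact} bounds their reward gap uniformly by $\varepsilon_r^{\mathrm{bin}}$ and their transition gap in TV by $\varepsilon_P^{\mathrm{bin}}$. Rewards are network adoption rates, so they lie in $[0,1]$. Applying Lemma~\ref{lem:simulation_final} to the Markov policy $\pi\in\Pi_S$ then gives $|V_1^{S,\pi}(s_1^\star)-V_1^{\mu,\pi}(s_1^\star)|\le H\varepsilon_r^{\mathrm{bin}}+H^2\varepsilon_P^{\mathrm{bin}}=\Delta_{\mathrm{bin}}$.

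The first inequality compares two processes on different state spaces: the full-history process $Z_h$ under $\pi^\circ$ and the aggregate process $s_h$ in $\mathcal M^S$. Lemma~\ref{lem:simulation_final} does not apply verbatim, so I would redo its backward induction directly. Define the full-history value-to-go $V_h^{Z}(z)$ of $\pi^\circ$, with $V_{H+1}^Z=0$, and set
\[
\Delta_h:=\sup_z\bigl|V_h^{Z}(z)-V_h^{S,\pi}(\phi_{\theta_0}(z))\bigr|,
\]
where the supremum is over histories $z$ reachable under $\pi^\circ$ (the ``relevant'' $z$ of the assumption). Fix $z$ and put $b=\pi_h(\phi_{\theta_0}(z))$, which is the action taken in both processes. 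Subtract the two Bellman equations, and add and subtract $\mathbb E_{Z'\sim P_h^{Z,S}(\cdot\mid z,b)}[V_{h+1}^{S,\pi}(\phi_{\theta_0}(Z'))]$. This yields three terms:
\begin{itemize}
\item a reward gap, at most $\varepsilon_r^{\mathrm{agg}}$;
\item an expected value gap $\mathbb E[|V_{h+1}^Z(Z')-V_{h+1}^{S,\pi}(\phi_{\theta_0}(Z'))|]$, at most $\Delta_{h+1}$;
\item the difference between the expectation of $V_{h+1}^{S,\pi}$ under the pushforward $\phi_{\theta_0\#}P_h^{Z,S}(\cdot\mid z,b)$ and under $P_h^S(\cdot\mid\phi_{\theta_0}(z),b)$.
\end{itemize}
The third term is at most $(H-h)\varepsilon_P^{\mathrm{agg}}$, because $0\le V_{h+1}^{S,\pi}\le H-h$. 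Unrolling $\Delta_h\le\varepsilon_r^{\mathrm{agg}}+(H-h)\varepsilon_P^{\mathrm{agg}}+\Delta_{h+1}$ gives $\Delta_1\le\Delta_{\mathrm{agg}}$. Since $W_H^Z(\pi^\circ;Z_1)=V_1^Z(Z_1)$, this is the first claim. The final statement then follows from the triangle inequality.

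The main obstacle is the first step. The bound must hold only along trajectories where Assumption~\ref{ass:qising_abstraction_compact} applies, which is why the lemma restricts to policies whose induced trajectory is covered by the local bounds. I would take the supremum only over the support of $Z_h$ under $\pi^\circ$. I would also check that every successor $Z'$ drawn from $P_h^{Z,S}(\cdot\mid z,b)$ stays in that support, so that the $\Delta_{h+1}$ term remains controlled. A smaller point is measurability and boundedness of $V_{h+1}^{S,\pi}$ on $\mathcal S$, which is needed for the TV bound; rewards in $[0,1]$ make this routine.
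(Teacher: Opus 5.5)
Your proof is correct and follows the paper's route. Both bounds come from the finite-horizon simulation lemma: the second is obtained by applying Lemma~\ref{lem:simulation_final} directly to $\mathcal M^S$ and $\mathcal M^\mu$, and the final claim follows by the triangle inequality. For the first bound, the paper simply invokes Lemma~\ref{lem:simulation_final} on the ``pushed-forward full-network process''; you instead note that this process is not itself an MDP on $\mathcal S$, and rerun the same backward induction on reachable histories with $V_{h+1}^{S,\pi}\circ\phi_{\theta_0}$ as the comparison function, which makes that step rigorous.
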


\begin{proof}
To see the first statement, recall that for $\pi\in\Pi_S$, its oracle full-history lift is $\pi_h^\circ(z)= \pi_h(\phi_{\theta_0}(z)).$ That is, the full-network policy and the aggregate policy make the same bin decision whenever the aggregate state is $s=\phi_{\theta_0}(z)$. Under the target rule, choosing bin $b$ in the full network means drawing $a_h\sim\mathrm{Unif}(\mathcal B_b)$. On the other hand, the resulting full-network transition is $P_h^{Z,S}(\cdot\mid z,b)$, with the induced next Q-Ising state has law as $\phi_{\theta_0\#}P_h^{Z,S}(\cdot\mid z,b)$. By Assumption~\ref{ass:qising_abstraction_compact}, for all relevant $h,z,b$, we have $ \left| r_h^{Z,S}(z,b)-r_h^S(\phi_{\theta_0}(z),b) \right| \le \varepsilon_r^{\mathrm{agg}}$ and$ \left\| \phi_{\theta_0\#}P_h^{Z,S}(\cdot\mid z,b)
  - P_h^S(\cdot\mid \phi_{\theta_0}(z),b) \right\|_{\mathrm{TV}} \le \varepsilon_P^{\mathrm{agg}}.$
Applying the finite-horizon simulation lemma \ref{lem:simulation_final} to the pushed-forward full-network process and the target aggregate MDP yields the first claim.

The second statement is more straightforward since both $\mathcal M^S$ and the logged aggregate MDP $\mathcal M^\mu$ share the same state space $\mathcal S$, action space $[K]$, and horizon $H$. The statement follows by the second statement in Assumption~\ref{ass:qising_abstraction_compact} and by applying the simulation Lemama \ref{lem:simulation_final} again.
\end{proof}

\begin{lemma}[Self-normalized PEVI Noise]
\label{lem:self_normalized_pevi}
For each \(h\in[H]\), define $$S_h := \sum_{\tau\in\mathcal I_h}x_{\tau,h}\xi_{\tau,h},
  \qquad
  \Lambda_h = \lambda I_{d_\phi} +
  \sum_{\tau\in\mathcal I_h}x_{\tau,h}x_{\tau,h}^\top .$$
Suppose $\|x_{\tau,h}\|_2\le1$, $x_{\tau,h}$ is predictable with respect to the stage-$h$ sample filtration, and $\xi_{\tau,h}$ is conditionally mean-zero and $H$-sub-Gaussian. Then, with probability at least $1-\delta$,
simultaneously for all \(h\in[H]\),
\begin{equation}
  \|S_h\|_{\Lambda_h^{-1}} \le C H \sqrt{ d_\phi \log\!\left( \frac{H(1+n_{\log}/\lambda)}{\delta} \right)
  } ,
  \label{eq:self_normalized_bound}
\end{equation}
where \(C>0\) is a universal constant. Consequently, for every $x=\varphi(s,b)$,
\begin{equation}
  \left| x^\top\Lambda_h^{-1}S_h \right| \le
  C H \sqrt{ d_\phi \log\!\left( \frac{H(1+n_{\log}/\lambda)}{\delta} \right)} \sqrt{x^\top\Lambda_h^{-1}x}.
  \label{eq:self_normalized_prediction_bound}
\end{equation}
\end{lemma}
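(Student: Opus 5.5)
This is the standard self-normalized martingale bound of Abbasi-Yadkori, Pál and Szepesvári, applied once per stage and followed by a union bound over $h\in[H]$. I will first fix a stage $h$ and order the blocks in $\mathcal I_h$ according to the filtration $\{\mathcal F_{\tau,h}\}$ supplied by Assumption~\ref{ass:valid_offline_compact}. Conditionally on the data used for the cross-fitted states and on $\widehat V_{h+1}$, the regressors $x_{\tau,h}$ are predictable and $\xi_{\tau,h}$ is a conditionally mean-zero, $H$-sub-Gaussian martingale difference. The rescaled noise $\xi_{\tau,h}/H$ is then $1$-sub-Gaussian.

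The self-normalized inequality gives, with probability at least $1-\delta'$,
\begin{equation*}
\|S_h\|_{\Lambda_h^{-1}}^2 \le 2H^2\log\!\left(\frac{\det(\Lambda_h)^{1/2}\det(\lambda I_{d_\phi})^{-1/2}}{\delta'}\right).
\end{equation*}
To control the determinant, I will use the AM--GM inequality on the eigenvalues together with $\|x_{\tau,h}\|_2\le 1$. This gives $\det(\Lambda_h)\le(\lambda+n_h/d_\phi)^{d_\phi}\le \lambda^{d_\phi}(1+n_{\log}/\lambda)^{d_\phi}$, so the log-determinant ratio is at most $\tfrac{d_\phi}{2}\log(1+n_{\log}/\lambda)$. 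Setting $\delta'=\delta/H$ and taking a union bound over the $H$ stages gives probability at least $1-\delta$ simultaneously for all $h$. The bound becomes $2H^2[\tfrac{d_\phi}{2}\log(1+n_{\log}/\lambda)+\log(H/\delta)]$. Since $d_\phi\ge1$, this is at most $C^2H^2 d_\phi\log\!\big(H(1+n_{\log}/\lambda)/\delta\big)$, which is \eqref{eq:self_normalized_bound}.

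The prediction bound \eqref{eq:self_normalized_prediction_bound} then follows from Cauchy--Schwarz in the $\Lambda_h^{-1}$ geometry: $|x^\top\Lambda_h^{-1}S_h|=|\langle \Lambda_h^{-1/2}x,\Lambda_h^{-1/2}S_h\rangle|\le\|x\|_{\Lambda_h^{-1}}\|S_h\|_{\Lambda_h^{-1}}$. This holds deterministically for every $x=\varphi(s,b)$ on the same event, so no further union over $(s,b)$ is needed.

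The main obstacle is the dependence of $\xi_{\tau,h}=\xi_{\tau,h}(\widehat V_{h+1})$ on $\widehat V_{h+1}$, which is itself built from the data. In \citet{jin2021pessimism} this is handled by a covering argument over the value-function class, which would add a $d_\phi\log(\cdot)$ term inside the square root. Here the lemma is stated with the filtration conditional on $\widehat V_{h+1}$ and on the cross-fitting folds, as Assumption~\ref{ass:valid_offline_compact} supplies. My plan is therefore to apply the martingale inequality conditionally and then integrate out the conditioning, which preserves the probability $1-\delta/H$. If that conditioning were judged insufficient, I would instead cover the truncated linear-minus-bonus class and absorb the resulting extra $d_\phi\log$ factor into the constant $C$; the logarithmic argument already matches that form up to the dependence on $\beta$ and $W$.
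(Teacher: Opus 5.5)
Your argument is correct and is essentially the paper's proof. It applies the Abbasi-Yadkori--P\'al--Szepesv\'ari self-normalized bound per stage at level $\delta/H$, takes a union bound, bounds $\det(\Lambda_h)/\lambda^{d_\phi}$ by AM--GM on the eigenvalues, absorbs constants and finishes with Cauchy--Schwarz, handling the dependence on $\widehat V_{h+1}$ exactly as the paper does, through the conditional filtration of Assumption~\ref{ass:valid_offline_compact}. Your fallback remark is off, though: covering the truncated linear-minus-bonus class as in \citet{jin2021pessimism} adds a log-covering term of order $d_\phi^2\log(1+\sqrt{d_\phi}\,\beta^2/(\lambda\epsilon^2))$, giving $d_\phi H$ rather than $H\sqrt{d_\phi}$ scaling and a $\beta$-dependent logarithm that no universal constant $C$ can absorb, so the lemma as stated genuinely rests on the conditioning assumption.
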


\begin{proof}
Fix a stage $h$. By the standard self-normalized martingale inequality (see Theorem 1 of \cite{abbasi2011improved}), we have for any \(\delta_h\in(0,1)\),
$$\|S_h\|_{\Lambda_h^{-1}} \le H \sqrt{ 2\log\left( \frac{ \det(\Lambda_h)^{1/2}}{\det(\lambda I_{d_\phi})^{1/2}\delta_h}\right)}$$
with probability at least $1-\delta_h$. Taking $\delta_h=\delta/H$ and applying the union bound gives the same display simultaneously over all stages with probability at least $1-\delta$.

It remains to simplify the determinant term. Let $A_h:=\sum_{\tau\in\mathcal I_h}x_{\tau,h}x_{\tau,h}^\top$ so that $\Lambda_h=\lambda I_{d_\phi}+A_h.$ Denote $\nu_1,\ldots,\nu_{d_\phi}$ as the eigenvalues of $A_h$. It then follows that
$$ \frac{\det(\Lambda_h)}{\det(\lambda I_{d_\phi})} = \det\left(I_{d_\phi}+\frac{A_h}{\lambda}\right)
  =\prod_{j=1}^{d_\phi} \left(1+\frac{\nu_j}{\lambda}\right).$$
Since $\|x_{\tau,h}\|_2\le1$, we have
$$\sum_{j=1}^{d_\phi}\nu_j = \operatorname{tr}(A_h) = \sum_{\tau\in\mathcal I_h}\|x_{\tau,h}\|_2^2
  \le n_h \le n_{\log}.$$
Recall that the AM-GM inequality gives $\frac{\sum_{j=1}^n a_j}{n} \geq (\prod_{j=1}^n a_j)^{1/n}$ for nonnegative $a_j$. An application of AM-GM inequality with $a_j:=\frac{\nu_j}{\lambda}$ yields
$$\prod_{j=1}^{d_\phi} \left(1+\frac{\nu_j}{\lambda}\right) \le \left( 1+\frac{n_h}{\lambda d_\phi} \right)^{d_\phi} \le
  \left( 1+\frac{n_{\log}}{\lambda d_\phi} \right)^{d_\phi}.$$
It follows that 
$$\|S_h\|_{\Lambda_h^{-1}} \le H \sqrt{ d_\phi \log\left( 1+\frac{n_{\log}}{\lambda d_\phi} \right)
    + 2\log(H/\delta)}.$$
Absorbing constants in the log gives
$$\|S_h\|_{\Lambda_h^{-1}} \le C H \sqrt{ d_\phi \log\!\left( \frac{H(1+n_{\log}/\lambda)}{\delta}\right)}.$$
This proves the first statement.

The second statement follows by Cauchy-Schwarz:
$$ |x^\top\Lambda_h^{-1}S_h| = \left| x^\top\Lambda_h^{-1/2}\Lambda_h^{-1/2}S_h
  \right| \le \sqrt{x^\top\Lambda_h^{-1}x}\, \|S_h\|_{\Lambda_h^{-1}}.$$
Combining the inequality above with the first statement proves the result.
\end{proof}

\begin{lemma}[PEVI Bellman confidence]
\label{lem:pevi_confidence_final}
Suppose Assumptions~\ref{ass:valid_offline_compact}, \ref{ass:first_stage_compact}, and \ref{ass:linear_bellman_compact} hold with the residual conditions described in
\eqref{eq:regression_residual_decomposition}-\eqref{eq:appendix_projected_residual}. Define
$$ \Gamma_h(s,b) := \beta \sqrt{ \varphi(s,b)^\top \Lambda_h^{-1} \varphi(s,b)}.$$
Choose
\begin{equation}
  \beta = C_\beta \left[ H \sqrt{ d_\phi \log\!\left( \frac{H(1+n_{\log}/\lambda)}{\delta}
    \right)}
  + \sqrt{\lambda}\,W \right]
  \label{eq:appendix_beta_choice}
\end{equation}
for a sufficiently large universal constant $C_\beta$. Then, with
probability at least $1-\delta-\delta_\theta$, for every $h\in[H]$ and
every $(s,b)\in\mathcal S\times[K]$,
\begin{equation}
  \left| \varphi(s,b)^\top\widehat w_h - (\mathcal T_h^\mu\widehat V_{h+1})(s,b)
  \right| \le \Gamma_h(s,b) +
  \zeta,
  \label{eq:pevi_confidence_final}
\end{equation}
where 
\begin{equation}
  \zeta = C_{\mathrm{app}} \left( \varepsilon_{\mathrm{lin}} +
    L_{\mathrm{st}}\varepsilon_\theta \right)
  \label{eq:zeta_final}
\end{equation}
for a universal constant $C_{\mathrm{app}}>0$.
\end{lemma}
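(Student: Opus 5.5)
The proof follows the standard PEVI confidence argument of \citet{jin2021pessimism}. It uses the residual decomposition \eqref{eq:regression_residual_decomposition} in place of an exact linear model. Fix a stage $h$ and write $V=\widehat V_{h+1}$ and $w_h=w_h(V)$ from Assumption~\ref{ass:linear_bellman_compact}. Substituting $Y_{\tau,h}(V)=x_{\tau,h}^\top w_h+\xi_{\tau,h}+e_{\tau,h}(V)$ into the ridge estimator and using $\sum_\tau x_{\tau,h}x_{\tau,h}^\top=\Lambda_h-\lambda I$ gives the exact identity
\begin{equation*}
\widehat w_h - w_h = -\lambda\Lambda_h^{-1}w_h + \Lambda_h^{-1}S_h + \Lambda_h^{-1}\sum_{\tau\in\mathcal I_h}x_{\tau,h}e_{\tau,h}(V),
\end{equation*}
with $S_h=\sum_\tau x_{\tau,h}\xi_{\tau,h}(V)$. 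Next I add and subtract $\varphi(s,b)^\top w_h$. This splits $|\varphi(s,b)^\top\widehat w_h-(\mathcal T_h^\mu V)(s,b)|$ into four pieces: a pointwise linear-approximation error, a ridge bias term, a noise term and a projected misspecification term.

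Each piece is bounded as follows.
\begin{itemize}
\item The approximation error $|(\mathcal T_h^\mu V)(s,b)-\varphi(s,b)^\top w_h|$ is at most $\varepsilon_{\mathrm{lin}}$ by the first part of Assumption~\ref{ass:linear_bellman_compact}.
\item The bias term satisfies $|\lambda\varphi^\top\Lambda_h^{-1}w_h|\le \lambda\|\varphi\|_{\Lambda_h^{-1}}\|w_h\|_{\Lambda_h^{-1}}\le\sqrt\lambda W\|\varphi\|_{\Lambda_h^{-1}}$, by Cauchy--Schwarz and $\Lambda_h\succeq\lambda I$.
\item The noise term $|\varphi^\top\Lambda_h^{-1}S_h|$ is bounded by \eqref{eq:self_normalized_prediction_bound} of Lemma~\ref{lem:self_normalized_pevi}. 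This holds with probability $1-\delta$, since Assumption~\ref{ass:valid_offline_compact} supplies predictability of $x_{\tau,h}$ and the conditionally mean-zero $H$-sub-Gaussian noise.
\item The projected misspecification term is bounded on $\mathcal E_\theta$ by $c_{\mathrm{app}}(\varepsilon_{\mathrm{lin}}+L_{\mathrm{st}}\varepsilon_\theta)$, using \eqref{eq:appendix_projected_residual}.
\end{itemize}
Adding the noise and bias bounds gives $(CH\sqrt{d_\phi\log(\cdot)}+\sqrt\lambda W)\|\varphi\|_{\Lambda_h^{-1}}\le\Gamma_h(s,b)$ once $C_\beta\ge\max(C,1)$. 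The remaining terms combine into $\zeta$ with $C_{\mathrm{app}}=1+c_{\mathrm{app}}$. A union bound over the good event of Lemma~\ref{lem:self_normalized_pevi} (already uniform in $h$) and $\mathcal E_\theta$ gives probability at least $1-\delta-\delta_\theta$.

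The main obstacle is that $V=\widehat V_{h+1}$ is data-dependent. A self-normalized bound for a fixed $V$ does not directly apply to it, and in \citet{jin2021pessimism} this is handled by a covering argument over the PEVI value class. Here I would avoid that argument by using the stage-wise conditioning built into Assumption~\ref{ass:valid_offline_compact}. The assumption conditions on the data producing $\widehat V_{h+1}$ and the cross-fitted states, and asserts the martingale property for every $V$ in the recursion. So conditionally on $\widehat V_{h+1}$, the noise $\xi_{\tau,h}(\widehat V_{h+1})$ is a valid martingale difference sequence, and Lemma~\ref{lem:self_normalized_pevi} applies conditionally and then unconditionally. If this conditioning were deemed insufficient, the fallback is to add an $\epsilon$-net over the truncated linear-plus-bonus function class. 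That would enlarge the logarithm by a factor of order $d_\phi\log(1+\beta W n_{\log}/\lambda)$, and this factor can be absorbed into $C_\beta$ only up to log terms. I would state that this is why the conditioning assumption is imposed.
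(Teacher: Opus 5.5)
Your proposal is correct and follows the paper's proof essentially step for step. You use the same ridge identity $\widehat w_h-w_h=-\lambda\Lambda_h^{-1}w_h+\Lambda_h^{-1}S_h+\Lambda_h^{-1}\sum_{\tau}x_{\tau,h}e_{\tau,h}$, the Cauchy--Schwarz bias bound $\sqrt{\lambda}W\|\varphi\|_{\Lambda_h^{-1}}$, Lemma~\ref{lem:self_normalized_pevi} for the noise, \eqref{eq:appendix_projected_residual} for the misspecification, the pointwise $\varepsilon_{\mathrm{lin}}$ term from Assumption~\ref{ass:linear_bellman_compact}, and a union bound over $\mathcal E_{\mathrm{SN}}$ and $\mathcal E_\theta$; your explicit justification for applying the self-normalized bound to the data-dependent $\widehat V_{h+1}$ via the stage-wise conditioning in Assumption~\ref{ass:valid_offline_compact} spells out a point the paper's proof leaves implicit.
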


\begin{proof}
Let $\mathcal E_{\mathrm{SN}}$ be the self-normalized concentration event
from Lemma~\ref{lem:self_normalized_pevi}, and $\mathcal E_\theta$ be the
event on which the projected first-stage residual bound \eqref{eq:projected_theta_residual} holds. We will work on $ \mathcal E_{\mathrm{SN}}\cap\mathcal E_\theta.$

Fix a stage \(h\). For notational simplicity, write $x_\tau := x_{\tau,h} =\varphi(\widehat s_{\tau,h},b_{\tau,h})$, $x:=\varphi(s,b)$, and $ \Lambda_h = \lambda I_{d_\phi} +
  \sum_{\tau\in\mathcal I_h}x_\tau x_\tau^\top.$ Recall from the regression decomposition in  \eqref{eq:regression_residual_decomposition} that we have
$$Y_{\tau,h}= r_{\tau,h} + \widehat V_{h+1}(\widehat s_{\tau,h+1}) =
  x_\tau^\top w_h + \xi_{\tau,h} +e_{\tau,h}^{\mathrm{lin}} + e_{\tau,h}^{\theta},$$
  where we write $V=\widehat V_{h+1}$ and $ w_h:= w_h(\widehat V_{h+1})$.
  Let $ e_{\tau,h} := e_{\tau,h}^{\mathrm{lin}} + e_{\tau,h}^{\theta}$, then by the ridge regression formula, we have $\widehat w_h = \Lambda_h^{-1} \sum_{\tau\in\mathcal I_h} x_\tau Y_{\tau,h}$. Substituting the target decomposition yields
  $$ \widehat w_h-w_h =  -\lambda\Lambda_h^{-1}w_h + \Lambda_h^{-1}\sum_{\tau\in\mathcal I_h}x_\tau\xi_{\tau,h} + \Lambda_h^{-1}\sum_{\tau\in\mathcal I_h}x_\tau e_{\tau,h}.$$
  It follows that
\begin{align*}
  &|x^\top(\widehat w_h-w_h)| \le \lambda|x^\top\Lambda_h^{-1}w_h| + \left| x^\top\Lambda_h^{-1}
  \sum_{\tau\in\mathcal I_h}x_\tau\xi_{\tau,h} \right| +
  \left| x^\top\Lambda_h^{-1} \sum_{\tau\in\mathcal I_h}x_\tau e_{\tau,h} \right|.
\end{align*}

We will bound these three terms separately. Specifically, the first term can by bound by Cauchy-Schwarz and the fact that $\Lambda_h\succeq \lambda I_{d_\phi}$:
\begin{align}
    & \lambda|x^\top\Lambda_h^{-1}w_h| = \lambda \left|  x^\top\Lambda_h^{-1/2}\Lambda_h^{-1/2}w_h \right| \le \lambda \sqrt{x^\top\Lambda_h^{-1}x} \sqrt{w_h^\top\Lambda_h^{-1}w_h}, \\
    &   w_h^\top\Lambda_h^{-1}w_h \le \lambda^{-1}\|w_h\|_2^2,
\end{align}
which implies
$$\lambda|x^\top\Lambda_h^{-1}w_h| \le \sqrt{\lambda}W \sqrt{x^\top\Lambda_h^{-1}x}.$$

For the second term, we apply Lemma~\ref{lem:self_normalized_pevi} to obtain
$$ \left|  x^\top\Lambda_h^{-1} \sum_{\tau\in\mathcal I_h}x_\tau\xi_{\tau,h}
  \right| \le C H \sqrt{ d_\phi \log\!\left( \frac{H(1+n_{\log}/\lambda)}{\delta}
    \right)} \sqrt{x^\top\Lambda_h^{-1}x}.$$

We bound the last term by \eqref{eq:projected_lin_residual} and \eqref{eq:projected_theta_residual}, which gives
\begin{align*}
\left| x^\top\Lambda_h^{-1} \sum_{\tau\in\mathcal I_h}x_\tau e_{\tau,h}  \right| 
& \le \left| x^\top\Lambda_h^{-1} \sum_{\tau\in\mathcal I_h} x_\tau e_{\tau,h}^{\mathrm{lin}}  \right| +  \left| x^\top\Lambda_h^{-1} \sum_{\tau\in\mathcal I_h} x_\tau e_{\tau,h}^{\theta} \right| \\
& \le c_{\mathrm{lin}}\varepsilon_{\mathrm{lin}} + c_\theta L_{\mathrm{st}}\varepsilon_\theta
  \le c_{\mathrm{app}} \left( \varepsilon_{\mathrm{lin}} + L_{\mathrm{st}}\varepsilon_\theta
  \right).
\end{align*}
Combining the last three displays and choosing $C_\beta$ sufficiently large in \eqref{eq:appendix_beta_choice} yields
\begin{equation*}
  |x^\top(\widehat w_h-w_h)| \le \Gamma_h(s,b) + c_{\mathrm{app}}
  \left( \varepsilon_{\mathrm{lin}} + L_{\mathrm{st}}\varepsilon_\theta
  \right).
\end{equation*}
By assumption \ref{ass:linear_bellman_compact}, we have $\left|
  x^\top w_h - (\mathcal T_h^\mu\widehat V_{h+1})(s,b) \right| \le \varepsilon_{\mathrm{lin}}$. Combining this with the equation above, and absorbing constants into $C_{\mathrm{app}}$ gives 
$$\left| \varphi(s,b)^\top\widehat w_h  - (\mathcal T_h^\mu\widehat V_{h+1})(s,b) \right|
  \le \Gamma_h(s,b) +  C_{\mathrm{app}} \left( \varepsilon_{\mathrm{lin}} + L_{\mathrm{st}}\varepsilon_\theta \right).$$
Finally, the probability $ 1-\delta-\delta_\theta$ statement follows by union bound.
\end{proof}

\paragraph{Remark for Lemma \ref{lem:pevi_suboptimality_mu_final}}: The following lemma is proved entirely inside the oracle aggregate MDP $\mathcal M^\mu
  =(\mathcal S,[K],\{P_h^\mu\}_{h=1}^H,\{r_h^\mu\}_{h=1}^H,H)$, in which the nodes does not necessarily have to be selected uniformly from the bin. For a bounded function $V:\mathcal S\to[0,H]$, recall that the Bellman equation gives $(\mathcal T_h^\mu V)(s,b)  = r_h^\mu(s,b) + \mathbb E_{s'\sim P_h^\mu(\cdot\mid s,b)} [V(s')]$. Additionally, the value-to-go in $\mathcal M^\mu$ is given by 
$$ V_h^{\mu,\pi}(s)  = \mathbb E_{\mathcal M^\mu}^{\pi} \left[
    \sum_{u=h}^H r_u^\mu(s_u,b_u) \,\middle|\, s_h=s \right],
  \qquad V_{H+1}^{\mu,\pi} := 0. $$
Thus, if $b=\pi_h(s)$, the policy Bellman equation is $ V_h^{\mu,\pi}(s)
  = (\mathcal T_h^\mu V_{h+1}^{\mu,\pi})(s,b).$

\begin{lemma}[PEVI Optimality in Aggregate MDP] \label{lem:pevi_suboptimality_mu_final}
Assume the PEVI Bellman confidence event \eqref{eq:pevi_confidence_final} from Lemma~\ref{lem:pevi_confidence_final} holds. Then for every comparator policy $\pi\in\Pi_S$,  we have
\begin{equation}
  V_1^{\mu,\pi}(s_1^\star) - V_1^{\mu,\widehat\pi}(s_1^\star)
  \le 2\beta\, \mathfrak U_{\mathcal D}^{\mu}(\pi) + 2H\zeta ,
  \label{eq:pevi_mu_suboptimality_detailed}
\end{equation}
where $ \widehat\pi_h(s) \in \arg\max_{b\in[K]}\widehat Q_h(s,b)$ and $\mathfrak U_{\mathcal D}^{\mu}(\pi) = \sum_{h=1}^{H} \mathbb E_{\mathcal M^\mu}^{\pi}
  \left[ \sqrt{  \varphi(s_h,b_h)^\top \Lambda_h^{-1} \varphi(s_h,b_h)
  }\right]$. Consequently, with $\zeta = C_{\mathrm{app}} \left( \varepsilon_{\mathrm{lin}} + L_{\mathrm{st}}\varepsilon_\theta \right)$, we have
$$ V_1^{\mu,\pi}(s_1^\star) - V_1^{\mu,\widehat\pi}(s_1^\star)
  \le 2\beta\, \mathfrak U_{\mathcal D}^{\mu}(\pi) +
  C H \left( \varepsilon_{\mathrm{lin}} + L_{\mathrm{st}}\varepsilon_\theta
  \right).$$
\end{lemma}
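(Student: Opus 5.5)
The plan follows the standard PEVI suboptimality decomposition of \citet{jin2021pessimism}, carried out entirely inside the logged aggregate MDP $\mathcal M^\mu$. Define the model evaluation error $\iota_h(s,b):=(\mathcal T_h^\mu\widehat V_{h+1})(s,b)-\widehat Q_h(s,b)$. The first step is to show that on the confidence event \eqref{eq:pevi_confidence_final},
\[
-2\Gamma_h(s,b)-2\zeta\le \iota_h(s,b)\le \zeta
\]
for all $h,s,b$.

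For the upper bound, there are two cases. If the truncation at $0$ is inactive, then $\widehat Q_h(s,b)\ge\varphi(s,b)^\top\widehat w_h-\Gamma_h(s,b)$, and Lemma~\ref{lem:pevi_confidence_final} gives $\iota_h\le\zeta$. If $\widehat Q_h=0$, then $\varphi^\top\widehat w_h-\Gamma_h\le0$, so $\mathcal T_h^\mu\widehat V_{h+1}\le\zeta$ again. The upper truncation at $H-h+1$ is harmless, because $\mathcal T_h^\mu\widehat V_{h+1}\le 1+(H-h)$ when rewards lie in $[0,1]$ and $\widehat V_{h+1}\in[0,H-h]$.

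For the lower bound, I use $\widehat Q_h\le \max\{\varphi^\top\widehat w_h-\Gamma_h,0\}$ together with $\mathcal T_h^\mu\widehat V_{h+1}\ge0$. In the nontrivial case this gives $\iota_h\ge -(\Gamma_h+\zeta)+\Gamma_h-\ldots$; more carefully, $\iota_h\ge \mathcal T\widehat V-\varphi^\top\widehat w+\Gamma_h\ge -\zeta$. The lower side can therefore even be taken as $-\zeta$ on the untruncated branch. I only need a crude version, so I will record $\iota_h\ge -2\Gamma_h-2\zeta$ for safety. What is actually needed is $-\iota_h\le 2\Gamma_h+\zeta$, which follows from $\widehat Q_h\le\varphi^\top\widehat w_h-\Gamma_h+\ldots$ and the confidence bound.

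The second step is the extended value difference lemma (Lemma A.1 of \citealt{jin2021pessimism}), applied in $\mathcal M^\mu$ for an arbitrary comparator $\pi$ and the greedy $\widehat\pi$:
\begin{align*}
V_1^{\mu,\pi}(s_1^\star)-V_1^{\mu,\widehat\pi}(s_1^\star)
&=\sum_{h=1}^H\mathbb E^{\pi}_{\mathcal M^\mu}\bigl[\widehat Q_h(s_h,\pi_h(s_h))-\widehat Q_h(s_h,\widehat\pi_h(s_h))\bigr]\\
&\quad+\sum_{h=1}^H\mathbb E^{\pi}_{\mathcal M^\mu}[\iota_h(s_h,b_h)]-\sum_{h=1}^H\mathbb E^{\widehat\pi}_{\mathcal M^\mu}[\iota_h(s_h,b_h)].
\end{align*}
I derive this by telescoping $V^{\mu,\pi}_h-\widehat V_h$ and $\widehat V_h-V^{\mu,\widehat\pi}_h$ through the policy Bellman equations recalled in the remark preceding the lemma, using $\widehat V_h(s)=\widehat Q_h(s,\widehat\pi_h(s))$.

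With both pieces in hand, the bound assembles term by term. The first sum is $\le0$ because $\widehat\pi_h$ is greedy with respect to $\widehat Q_h$. The third sum is bounded by $H\zeta$, since $-\iota_h\le\ldots$. The middle sum is at most $\sum_h\mathbb E^\pi[2\Gamma_h+\zeta]$, which equals $2\beta\,\mathfrak U^\mu_{\mathcal D}(\pi)+H\zeta$. Adding these gives \eqref{eq:pevi_mu_suboptimality_detailed}, and substituting $\zeta$ from \eqref{eq:zeta_final} yields the consequence with $C=2C_{\mathrm{app}}$.

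The main obstacle is getting the signs of $\iota_h$ right, so that the comparator side pays $2\Gamma_h$ while the learned-policy side pays only $\zeta$, including under both truncation branches. I will organize that step by proving $-\zeta\le\iota_h\le 2\Gamma_h+\zeta$, the correct PEVI orientation. Pessimism makes $\widehat Q_h$ an underestimate, so $\iota_h\ge-\zeta$ and $\iota_h\le 2\Gamma_h+\zeta$. With this orientation the comparator term $\mathbb E^\pi[\iota_h]$ pays $2\Gamma_h+\zeta$ and the $\widehat\pi$ term $-\mathbb E^{\widehat\pi}[\iota_h]$ pays $\zeta$.
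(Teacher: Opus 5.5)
Your overall route is essentially the paper's. The extended value-difference identity you write down is exactly the sum of the paper's two inequalities: $V_1^{\mu,\pi}-\widehat V_1$ unrolled along $\pi$, with greediness absorbing $\widehat Q_h(s,\pi_h(s))-\widehat V_h(s)\le 0$, and $\widehat V_1-V_1^{\mu,\widehat\pi}$ unrolled along $\widehat\pi$. Your final assembly is also the correct one: it charges $\iota_h\le 2\Gamma_h+\zeta$ to the comparator and $-\iota_h\le\zeta$ to $\widehat\pi$, which gives \eqref{eq:pevi_mu_suboptimality_detailed} with $C=2C_{\mathrm{app}}$.

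The problem is your first step, which is the only nontrivial part of the lemma. As written, it asserts $-2\Gamma_h-2\zeta\le\iota_h\le\zeta$, and the case arguments for the upper half are false. Suppressing $(s,b)$, on the untruncated branch $\widehat Q_h=\varphi^\top\widehat w_h-\Gamma_h$ together with the confidence bound only yields $\widehat Q_h\ge\mathcal T_h^\mu\widehat V_{h+1}-2\Gamma_h-\zeta$, i.e.\ $\iota_h\le 2\Gamma_h+\zeta$. Nothing on the confidence event rules out $\varphi^\top\widehat w_h=\mathcal T_h^\mu\widehat V_{h+1}-\Gamma_h-\zeta$, where $\iota_h=2\Gamma_h+\zeta>\zeta$. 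Likewise, on the branch $\widehat Q_h=0$ you only get $\mathcal T_h^\mu\widehat V_{h+1}\le\varphi^\top\widehat w_h+\Gamma_h+\zeta\le 2\Gamma_h+\zeta$, not $\le\zeta$.

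The orientation is not cosmetic. Plugged into your decomposition, the bound you first state would charge the bonus $2\Gamma_h$ along the trajectory of $\widehat\pi$. That yields $2\beta\,\mathfrak U^\mu_{\mathcal D}(\widehat\pi)$ instead of $2\beta\,\mathfrak U^\mu_{\mathcal D}(\pi)$, which is precisely what pessimism is meant to avoid.

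Your closing paragraph announces the correct orientation $-\zeta\le\iota_h\le 2\Gamma_h+\zeta$, but it does not prove it. The heuristic ``pessimism makes $\widehat Q_h$ an underestimate'' only justifies the lower half. Your sentence ``what is actually needed is $-\iota_h\le 2\Gamma_h+\zeta$'' again has it backwards. So the upper half, the one that makes the comparator pay, is never derived.

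The repair is the branch-by-branch check the paper relies on; with it, the rest of your argument goes through unchanged.
\begin{itemize}
\item If $\widehat Q_h$ is unclipped, the confidence bound gives $-\zeta\le\iota_h\le 2\Gamma_h+\zeta$ directly.
\item If $\widehat Q_h=0$, then $\iota_h=\mathcal T_h^\mu\widehat V_{h+1}\ge 0$, since rewards and $\widehat V_{h+1}$ are nonnegative. Also $\iota_h\le\varphi^\top\widehat w_h+\Gamma_h+\zeta\le 2\Gamma_h+\zeta$, because clipping at zero means $\varphi^\top\widehat w_h\le\Gamma_h$.
\item If $\widehat Q_h=H-h+1$, then $\iota_h\le 0$ because $\mathcal T_h^\mu\widehat V_{h+1}\le 1+(H-h)$ (as $\widehat V_{h+1}\in[0,H-h]$). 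Also $\iota_h\ge-\zeta$, because $H-h+1\le\varphi^\top\widehat w_h-\Gamma_h\le\mathcal T_h^\mu\widehat V_{h+1}+\zeta$.
\end{itemize}
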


\begin{proof}
First observe that to show \eqref{eq:pevi_mu_suboptimality_detailed}, it is sufficient to prove the following two inequalities:
\begin{align}
    &  V_1^{\mu,\pi}(s_1^\star)-\widehat V_1(s_1^\star) \le 2\beta\,\mathfrak U_{\mathcal D}^{\mu}(\pi) + H\zeta. \label{eq:first-ineq} \\
    &  \widehat V_1(s_1^\star)  - V_1^{\mu,\widehat\pi}(s_1^\star) \le H\zeta.
\end{align}

\paragraph{First Inequality} Fix a comparator policy $\pi\in\Pi_S$ and define
$$ D_h^\pi(s) := V_h^{\mu,\pi}(s)-\widehat V_h(s).$$
Since $ \widehat V_h(s) = \max_{a\in[K]}\widehat Q_h(s,a)$ and $ V_h^{\mu,\pi}(s) = (\mathcal T_h^\mu V_{h+1}^{\mu,\pi})(s,b)$, we have $  D_h^\pi(s)
  \le (\mathcal T_h^\mu V_{h+1}^{\mu,\pi})(s,b) - \widehat Q_h(s,b).$ Adding and substracting $\mathcal T_h^\mu \widehat V_{h+1})(s,b)$ on the right hand side gives
 $$ D_h^\pi(s) \le \left[ (\mathcal T_h^\mu\widehat V_{h+1})(s,b)
  - \widehat Q_h(s,b) \right] + \mathbb E_{s'\sim P_h^\mu(\cdot\mid s,b)} [
    D_{h+1}^\pi(s')].$$ 
To upper bound $\left[ (\mathcal T_h^\mu\widehat V_{h+1})(s,b)
  - \widehat Q_h(s,b) \right]$, recall that (1) the Bellman confidence event from Lemma \ref{lem:pevi_confidence_final} gives $\left| \varphi(s,b)^\top\widehat w_h - (\mathcal T_h^\mu\widehat V_{h+1})(s,b) \right| \le \Gamma_h(s,b) +
  \zeta$, and (2) that $\widehat Q_h(s,b) = \left[ \varphi(s,b)^\top\widehat w_h
    - \Gamma_h(s,b) \right]_{[0,H-h+1]}$. If $\widehat  Q_h(s,b)$ is not clipped, it is straightforward to see combining these two facts yields
    $$(\mathcal T_h^\mu\widehat V_{h+1})(s,b) - \widehat Q_h(s,b) \le 2\Gamma_h(s,b)+\zeta.$$
    It is also easy to verify the inequality above still holds even if $ \widehat Q_h(s,b)$ is clipped at $0$ or at $(H-h+1)$. It follows that we have
    \begin{equation} \label{eq:unroll-eq}
         D_h^\pi(s) \le 2\Gamma_h(s,\pi_h(s)) + \mathbb E_{s'\sim P_h^\mu(\cdot\mid s,\pi_h(s))} [ D_{h+1}^\pi(s') ] + \zeta.
    \end{equation}

    We then unroll \eqref{eq:unroll-eq} along the trajectory generated by  policy $\pi$ in $\mathcal M^\mu$. Starting at $s_1^\star$, let $b_h=\pi_h(s_h)$, $s_{h+1}\sim P_h^\mu(\cdot\mid s_h,b_h)$. Since both $ V_{H+1}^{\mu,\pi}:=0$ and $\widehat V_{H+1}:=0$, we have $D_{H+1}^\pi=0$. Repeatably applying \eqref{eq:unroll-eq} from $h=1$ to $H$ gives
    $$V_1^{\mu,\pi}(s_1^\star)-\widehat V_1(s_1^\star)  = D_1^\pi(s_1^\star)
    \le 2 \sum_{h=1}^H \mathbb E_{\mathcal M^\mu}^{\pi}[ \Gamma_h(s_h,b_h)] + H\zeta.$$
    The first inequality therefore follows by recalling the expression of $\mathfrak U_{\mathcal D}^{\mu}$.
    
\paragraph{Second Inequality} To prove the second inequality, it is sufficient to prove that for every $h\in[H+1]$ and every $s\in\mathcal S$, we have
$$\widehat V_h(s)-V_h^{\mu,\widehat\pi}(s) \le (H-h+1)\zeta.$$
We will prove this via backward induction. It is clear to see the base case $H+1$ holds because $\widehat V_{H+1} = V_{H+1}^{\mu,\widehat\pi} =0$. Suppose this pattern holds at stage $h+1$. Fix $s\in\mathcal S$, and let $ \widehat b=\widehat\pi_h(s)$. Note that $\widehat V_h(s)=\widehat Q_h(s,\widehat b)$ by exact greediness. Again recall the Bellman confidence event from Lemma \ref{lem:pevi_confidence_final}: $\left| \varphi(s,b)^\top\widehat w_h - (\mathcal T_h^\mu\widehat V_{h+1})(s,b) \right| \le \Gamma_h(s,b) +
  \zeta$ and the fact that $\widehat Q_h(s,b) = \left[ \varphi(s,b)^\top\widehat w_h - \Gamma_h(s,b) \right]_{[0,H-h+1]}$. Combining these two facts gives
$$\widehat Q_h(s,\widehat b)  \le (\mathcal T_h^\mu\widehat V_{h+1})(s,\widehat b)+\zeta.$$
Given that $\widehat V_h(s)=\widehat Q_h(s,\widehat b)$, expanding the right side with the Bellman equation gives
\begin{align*}
    \widehat V_h(s) &\le r_h^\mu(s,\widehat b) + \mathbb E_{s'\sim P_h^\mu(\cdot\mid s,\widehat b)} [\widehat V_{h+1}(s')] + \zeta.\\
    & \le  r_h^\mu(s,\widehat b) + \mathbb E_{s'\sim P_h^\mu(\cdot\mid s,\widehat b)} [ V_{h+1}^{\mu,\widehat\pi}(s') ] + (H-h)\zeta+\zeta, \\
    & =  V_h^{\mu,\widehat\pi}(s) + (H-h+1)\zeta.
\end{align*}
where the second inequality is due to induction hypothesis. Evaluating at $h=1$ finishes the proof for the second inequality.
\end{proof}

\subsection{Proof of Theorem \ref{thm:pevi_qising_compact}}
\begin{proof}
 Let $\beta = C_\beta \left[ H \sqrt{ d_{\phi}\log\!\left( \frac{H(1+n_{\log}/\lambda)}{\delta} \right)} +\sqrt{\lambda} W \right]$ so that we can work on the PEVI Bellman confidence event $\mathcal E_{\mathrm{conf}}$ in Lemma \ref{lem:pevi_confidence_final}, where $\mathbb P(\mathcal E_{\mathrm{conf}}) \ge 1-\delta-\delta_\theta .$ We will work on this event throughout the proof.
 
 We first apply the first part of Lemma \ref{lem:abstraction_transfer_final} to obtain:
 \begin{align*}
      W_H^Z((\pi_S^\star)^\circ;Z_1) &\le V_1^{S,\pi_S^\star}(s_1^\star) + \Delta_{\mathrm{agg}}, \\
      W_H^Z(\widehat\pi^\circ;Z_1) &\ge  V_1^{S,\widehat\pi}(s_1^\star) - \Delta_{\mathrm{agg}},
 \end{align*}
 which implies that
$$W_H^Z((\pi_S^\star)^\circ;Z_1) - W_H^Z(\widehat\pi^\circ;Z_1)
  \le  V_1^{S,\pi_S^\star}(s_1^\star) - V_1^{S,\widehat\pi}(s_1^\star)
  + 2\Delta_{\mathrm{agg}}.$$
 Furthermore, an application of the second part of Lemma \ref{lem:abstraction_transfer_final} yields:
 \begin{align*}
      V_1^{S,\pi_S^\star}(s_1^\star) &\le V_1^{\mu,\pi_S^\star}(s_1^\star) +
  \Delta_{\mathrm{bin}}, \\
     V_1^{S,\widehat\pi}(s_1^\star) &\ge V_1^{\mu,\widehat\pi}(s_1^\star)  -
  \Delta_{\mathrm{bin}}.
 \end{align*}
 It follows that
 $$W_H^Z((\pi_S^\star)^\circ;Z_1) - W_H^Z(\widehat\pi^\circ;Z_1)
  \le V_1^{\mu,\pi_S^\star}(s_1^\star) - V_1^{\mu,\widehat\pi}(s_1^\star)
  + 2\Delta_{\mathrm{agg}} + 2\Delta_{\mathrm{bin}}.$$

  Since we work on the event $\mathcal E_{\mathrm{conf}}$, we can apply Lemma \ref{lem:pevi_suboptimality_mu_final} to obtain 
  $$V_1^{\mu,\pi_S^\star}(s_1^\star) - V_1^{\mu,\widehat\pi}(s_1^\star)
  \le 2\beta\, \mathfrak U_{\mathcal D}^{\mu}(\pi_S^\star)
  + C H \left( \varepsilon_{\mathrm{lin}} + L_{\mathrm{st}}\varepsilon_\theta \right).$$
  Recall that $ \Delta_{\mathrm{abs}} = \Delta_{\mathrm{agg}} + \Delta_{\mathrm{bin}}$. Combining the last two inequalities with sufficient large constant $C$ yields
  $$W_H^Z((\pi_S^\star)^\circ;Z_1) - W_H^Z(\widehat\pi^\circ;Z_1)
  \le 2\beta\, \mathfrak U_{\mathcal D}^{\mu}(\pi_S^\star)  +
  C \left[ \Delta_{\mathrm{abs}} +  H\varepsilon_{\mathrm{lin}}
    + HL_{\mathrm{st}}\varepsilon_\theta
  \right].$$
  This finishes the proof.
\end{proof}

\section{Experimental Set-up Details}
\label{app:experiment}
\subsection{SIS Dynamics}
\label{app:sis}

We model information diffusion on a fixed, undirected graph $G = (V, E)$ with $|V| = n$ nodes.
Each node $i \in V$ is at every period in one of two states: susceptible~(S) or infected/adopted~(I).
The planner selects a bin $a_t \in \{0, \ldots, K-1\}$ at each period $t$, and the environment
transitions through three ordered sub-steps: \emph{churn}, \emph{seeding}, and \emph{spreading}.

Each currently adopted node $i$ independently reverts to susceptible with a node-specific
probability $\delta_i \in (0,1)$, capturing heterogeneous loss of interest or product abandonment.

After churn, one susceptible node is drawn uniformly at random from the chosen bin $a_t$ and
forced to adopt (treatment is ``perfect,'' i.e.\ adoption occurs with probability one).

Every currently adopted node $i$ (including the newly seeded one) independently attempts to
transmit adoption to each susceptible neighbor $j \in \mathcal{N}(i)$.
Transmission from node $i$ to node $j$ succeeds with a node-specific probability $\beta_i \in (0,1)$.
Because $j$ may have multiple infected neighbors, the probability that $j$ adopts in this period is
\begin{equation}
  p_{j}^{\mathrm{adopt}} \;=\; 1 -\prod_{i \in \mathcal{N}(j),\, i \text{ adopted}} (1 - \beta_i).
\end{equation}
\subsection{CQL Design Choices}
\label{app:design}
We use the CQL algorithm implemented by \cite{seno2022d3rlpy}. The learning rate for both the encoder and Q-network is $3 \times 10^{-4}$.
Batch normalisation is applied within the encoder, together with a dropout rate of $0.3$.
Early stopping monitors the per-epoch TD loss and halts training if improvement is smaller than
$10^{-4}$ for 10 consecutive epochs (\texttt{patience} = 10, \texttt{min\_delta} = $10^{-4}$). For hyperparameters of the CQL we use $[256, 256]$ hidden layers, the batch size is 64, and the maximum allowed number of steps is 30,000 while, steps per epoch are 1000. For conservative penalty we use $\alpha = 0.1$ and for discount we use $\psi = 0.8$ for all the experiments. 
All the experiments are seeded appropriately for reproducibility.

\subsection{Microfinance Villages Experiment Further Details}
\label{app:village_exp_details}
In this experiment, the planner aims to learn a dynamic policy for selecting which communities (“bins”) within a village to target. These communities can be interpreted as clusters of households (e.g., friend groups) obtained via edge-betweennes based community detection. We use the implementation in $\mathrm{igraph}$ library in Python. To avoid very small groups, any identified community with fewer than 10 households is merged into the largest community. 42 of the 43 microfinance villages has at least 2 identifies communities and we experiment in these 42 villages. Some community examples and the distribution of households in all the villages can be seen in Figure ~\ref{fig:village_details}

\begin{figure}[H]
    \centering
    \includegraphics[width=1\linewidth]{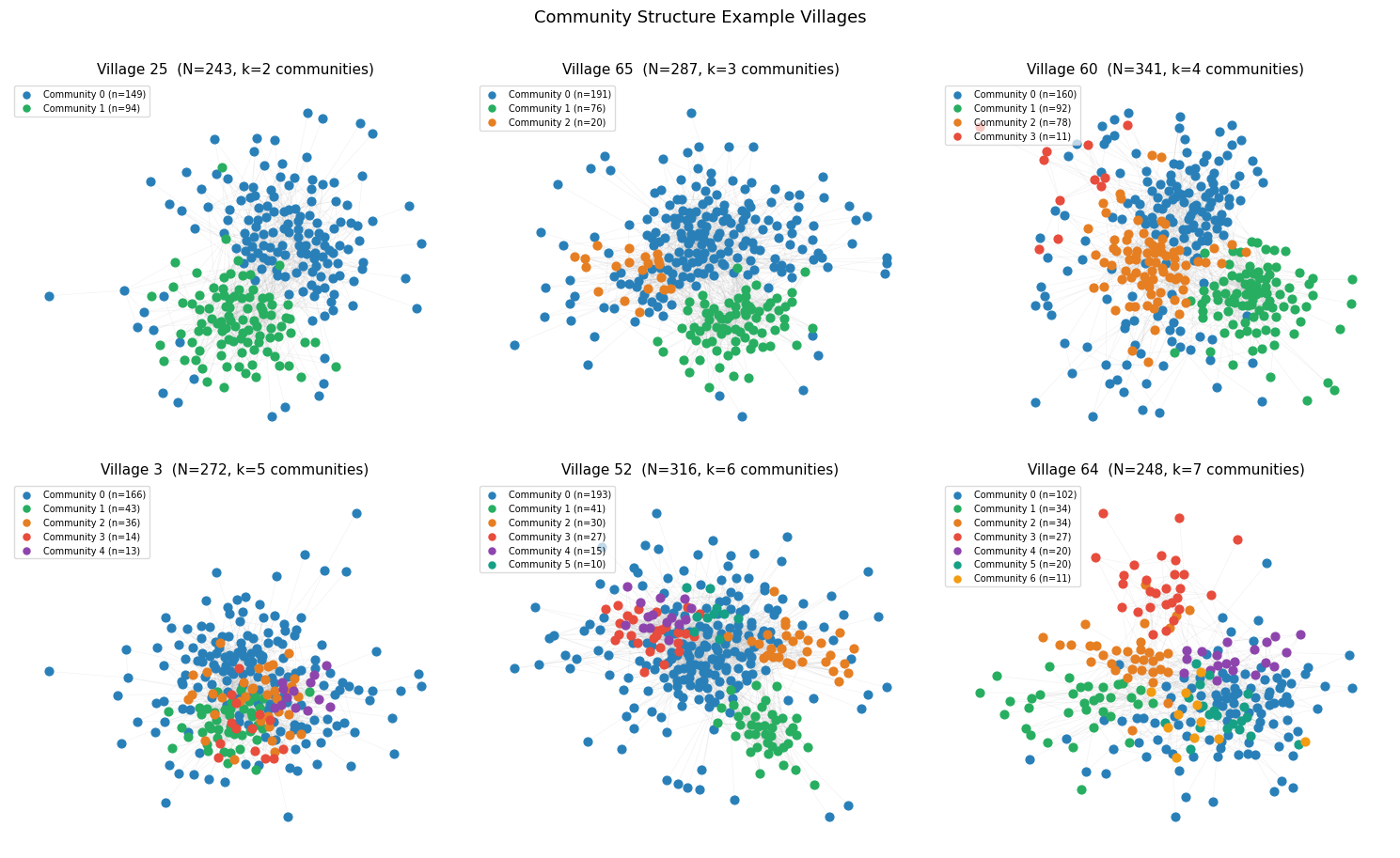}
      \includegraphics[width=1\linewidth]{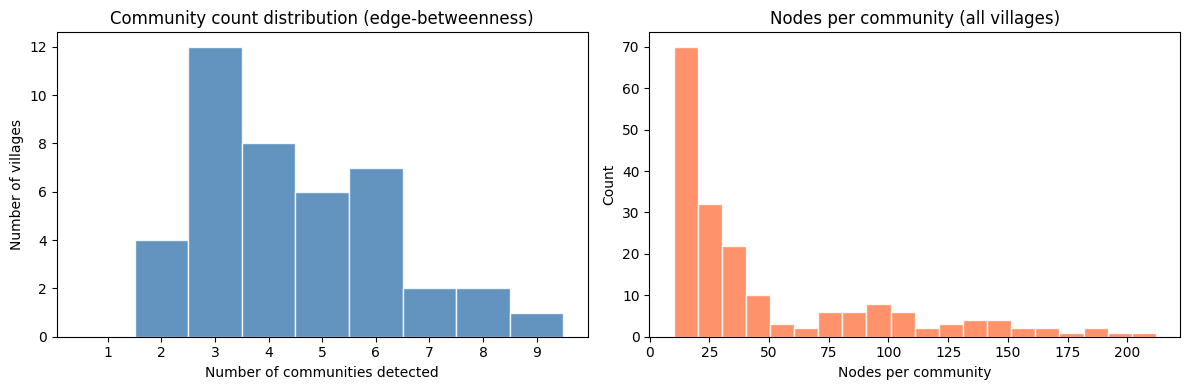}
    \caption{Top: Community detection examples from Indian Village dataset. Bottom: Distribution of identified community numbers and average degree per community }
    \label{fig:village_details}
\end{figure}

Each adopted household independently influences each of its neighbors to adopt with a community-specific probability. The spread parameters are set (in order of community size) to
$[0.01,0.5,0.05,0.07,0.06,0.02,0.01,0.4,0.1,0.3]$,
and the corresponding churn rates (probability that an adopted household stops using the product in a period) are
$[0.5,0.9,0.9,0.6,0.5,0.5,0.7,0.6,0.5,0.8]$. This environment induces substantial heterogeneity across communities, including both supercritical and subcritical diffusion regimes, as well as relatively high churn rates. As a result, learning an effective policy requires balancing expansion into new communities with maintenance of existing adoption.

For each village, the planner is provided with 500 periods of historical data generated by a random policy that selects communities uniformly at random. The evaluation (test) phase begins from a zero-adoption state. This can be interpreted as observing past adoption data from a similar product and then deploying a new, related product using a learned policy. The test horizon is 25 periods, and performance is measured by the average adoption rate per period over this horizon. Each test is independently ran 50 times to produce accurate standard deviation estimates.

The ensemble alternative for this experiment uses the same hyperparameters as defined in Section ~\ref{sec:method}. 200 MCMC draws are used for posterior parameter estimation, with 300 tune-in iterations. For the ensemble policy 20 agents are trained.
\end{document}